%File: formatting-instructions-latex-2023.tex
%release 2023.0
\documentclass[letterpaper]{article} % DO NOT CHANGE THIS
\usepackage{aaai23}  % DO NOT CHANGE THIS
\usepackage{times}  % DO NOT CHANGE THIS
\usepackage{helvet}  % DO NOT CHANGE THIS
\usepackage{courier}  % DO NOT CHANGE THIS
\usepackage[hyphens]{url}  % DO NOT CHANGE THIS
\usepackage{graphicx} % DO NOT CHANGE THIS
\urlstyle{rm} % DO NOT CHANGE THIS
  % DO NOT CHANGE THIS
\usepackage{natbib}  % DO NOT CHANGE THIS AND DO NOT ADD ANY OPTIONS TO IT
\usepackage{caption} % DO NOT CHANGE THIS AND DO NOT ADD ANY OPTIONS TO IT
\frenchspacing  % DO NOT CHANGE THIS
\setlength{\pdfpagewidth}{8.5in}  % DO NOT CHANGE THIS
\setlength{\pdfpageheight}{11in}  % DO NOT CHANGE THIS
\usepackage{array}
\usepackage{amsmath}
\usepackage{amssymb}
\usepackage{amsthm}
\usepackage{multirow}
\usepackage{booktabs}
\usepackage{subfigure}
\usepackage[title]{appendix}
%
% These are recommended to typeset algorithms but not required. See the subsubsection on algorithms. Remove them if you don't have algorithms in your paper.
\usepackage{algorithm}
\usepackage{algorithmic}

%
% These are are recommended to typeset listings but not required. See the subsubsection on listing. Remove this block if you don't have listings in your paper.
\usepackage{newfloat}
\usepackage{listings}
\newtheorem{lemma}{Lemma}
\newtheorem{theorem}{Theorem}

\newtheorem{definition}{Definition}
\DeclareCaptionStyle{ruled}{labelfont=normalfont,labelsep=colon,strut=off} % DO NOT CHANGE THIS
\lstset{%
	basicstyle={\footnotesize\ttfamily},% footnotesize acceptable for monospace
	numbers=left,numberstyle=\footnotesize,xleftmargin=2em,% show line numbers, remove this entire line if you don't want the numbers.
	aboveskip=0pt,belowskip=0pt,%
	showstringspaces=false,tabsize=2,breaklines=true}
\floatstyle{ruled}
\newfloat{listing}{tb}{lst}{}
\floatname{listing}{Listing}
%
% Keep the \pdfinfo as shown here. There's no need
% for you to add the /Title and /Author tags.
\pdfinfo{
/TemplateVersion (2023.1)
}

\setcounter{secnumdepth}{0} %May be changed to 1 or 2 if section numbers are desired.

% The file aaai23.sty is the style file for AAAI Press
% proceedings, working notes, and technical reports.
%

% Title

% Your title must be in mixed case, not sentence case.
% That means all verbs (including short verbs like be, is, using,and go),
% nouns, adverbs, adjectives should be capitalized, including both words in hyphenated terms, while
% articles, conjunctions, and prepositions are lower case unless they
% directly follow a colon or long dash
\title{\textsc{FedGS}: Federated Graph-based Sampling with Arbitrary Client Availability}
\author{
Zheng Wang,\textsuperscript{\rm 1} 
Xiaoliang Fan, \textsuperscript{\rm 1,}\footnote{Corresponding Author} 
Jianzhong Qi, \textsuperscript{\rm 2} 
Haibing Jin,\textsuperscript{\rm 1}  
Peizhen Yang, \textsuperscript{\rm 1}\\ 
Siqi Shen,\textsuperscript{\rm 1} 
Cheng Wang\textsuperscript{\rm 1} 
}
\affiliations{
    %Afiliations
    \textsuperscript{\rm 1}Fujian Key Laboratory of Sensing and Computing for Smart Cities, School of Informatics, Xiamen University, Xiamen, China\\
    \textsuperscript{\rm 2}School of Computing and Information Systems, University of Melbourne, Melbourne, Australia\\
    % If you have multiple authors and multiple affiliations
    % use superscripts in text and roman font to identify them.
    % For example,

    % Sunil Issar, \textsuperscript{\rm 2}
    % J. Scott Penberthy, \textsuperscript{\rm 3}
    % George Ferguson,\textsuperscript{\rm 4}
    % Hans Guesgen, \textsuperscript{\rm 5}.
    % Note that the comma should be placed BEFORE the superscript for optimum readability
    zwang@stu.xmu.edu.cn, \\
    fanxiaoliang@xmu.edu.cn, jianzhong.qi@unimelb.edu.au, \{jinhaibing, yangpz\}@stu.xmu.edu.cn, \{siqishen,cwang\}@xmu.edu.cn
%
% See more examples next
}

%Example, Single Author, ->> remove \iffalse,\fi and place them surrounding AAAI title to use it

% REMOVE THIS: bibentry
% This is only needed to show inline citations in the guidelines document. You should not need it and can safely delete it.
\usepackage{bibentry}
% END REMOVE bibentry
\begin{document}

\maketitle

\begin{abstract}
While federated learning has shown strong results in optimizing a machine learning model without direct access to the original data, 
its performance may be hindered by intermittent client availability which slows down the convergence and biases the final learned model. There are significant challenges to achieve both stable and bias-free training under arbitrary client availability. To address these challenges, we propose a framework named Federated Graph-based Sampling (\textsc{FedGS}), to stabilize the global model update and mitigate the long-term bias given arbitrary client availability simultaneously. First, we model the data correlations of clients with a Data-Distribution-Dependency Graph (3DG) that helps keep the sampled clients data apart from each other, which is theoretically shown to improve the approximation to the optimal model update. Second, constrained by the far-distance in data distribution of the sampled clients, we further minimize the variance of the numbers of times that the clients are sampled, to mitigate long-term bias. To validate the effectiveness of \textsc{FedGS}, we conduct experiments on three datasets under a comprehensive set of seven client availability modes. Our experimental results confirm \textsc{FedGS}'s advantage in both enabling a fair client-sampling scheme and improving the model performance under arbitrary client availability. Our code is available at \url{https://github.com/WwZzz/FedGS}.
\end{abstract}
% Title

% Your title must be in mixed case, not sentence case.
% That means all verbs (including short verbs like be, is, using,and go),
% nouns, adverbs, adjectives should be capitalized, including both words in hyphenated terms, while
% articles, conjunctions, and prepositions are lower case unless they
% directly follow a colon or long dash
\section{Introduction}

Federated learning (FL) enables various data owners to collaboratively train a model without sharing their own data \cite{mcmahan2017communication}. 
In a FL system, there is a server that broadcasts a global model to clients and then aggregates the local models from them to update the global model. Such a distributed optimization may cause prohibitive communication costs due to the unavailability of clients \cite{gu2021fast}.
% especially when the number of clients is large. Therefore, a primary challenge in FL lies in improving communication efficiency given the limited capacity of the server \cite{kairouz2021advances}. 

As an early solution to this problem, \cite{mcmahan2017communication} propose to uniformly sample a random subset of clients without replacement to join the training process.
\begin{figure}
\setlength{\belowcaptionskip}{0cm}
    \centering
    \includegraphics[scale=0.34]{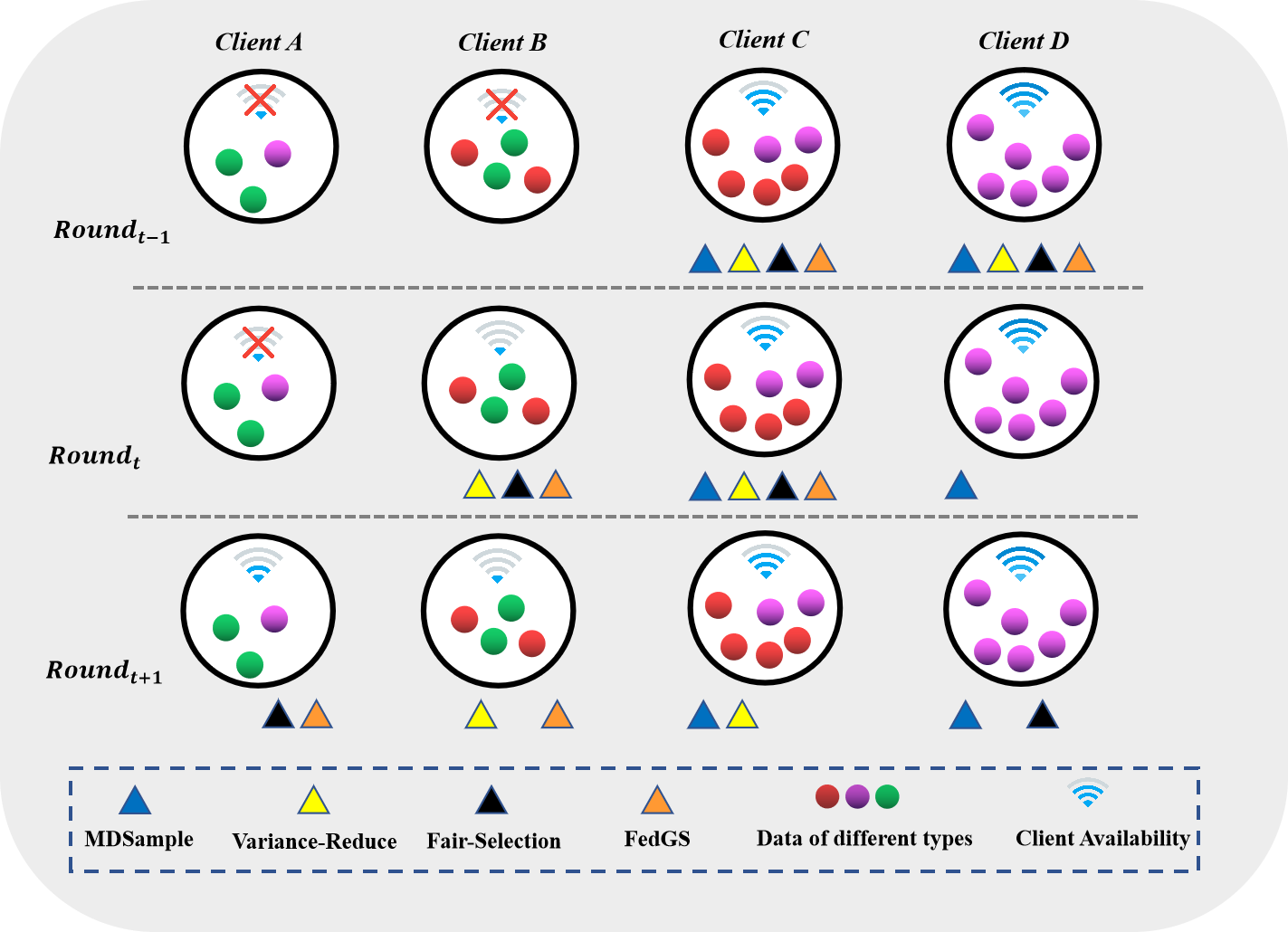}
    \caption{
An motivating example: there are significant challenging to achieve both stable model updates and bias-free training with the intermittent client availability in FL.
} 
    \label{fig1}
\end{figure}
\cite{li2020federated} sample clients in proportion to their data sizes with replacement to obtain an unbiased estimator of update. More recently, some works take the client availability into account when sampling clients \cite{yan2020distributed, gu2021fast, balakrishnan2021diverse, cho2020client, huang2020efficiency}. They show that selecting clients without considering whether the clients are active will lead to unbounded waiting times and poor responding rates. As a result,
they sample only the active clients to guarantee immediate client availability \cite{gu2021fast, cho2020client}. For example, in Fig.~\ref{fig1}, the server will not sample the inactive Client $A$ and Client $B$ at \textit{Round} ${t-1}$.

\textbf{However}, enabling both stable model updates and bias-free training under \emph{arbitrary client availability} (i.e., without any assumption on when each client will become available) poses significant challenges which have not been addressed. 
\textbf{On one hand}, the model is improved on the data distributions of the sampled clients at each round, which might lead to the detriment of the data specificity of non-sampled clients \cite{fraboni2021clustered}. For example, in Fig.~\ref{fig1}, fair-selection \cite{huang2020efficiency} tries to guarantee the least sampled times for each client (and hence it is ``fair''). While mitigating the long-term bias, it will ignore the data of the red type at \textit{Round} $t+1$, since it only considers the balance of the sampled frequency of clients. This fails to observe the data heterogeneity across the clients and leads to instability of the model update due to the absence of the gradients computed on the ``red'' data. 
%Indeed, since the server is unaware of the data distribution of clients, it's difficult to keep the model update stable, especially when the data correlation between clients is complex as we will detail in Sec. 3.1.
%
% This is because the client data heterogeneity and arbitrary client availability can cause large variance in model updates. 
%
\textbf{On the other hand}, the global models trained by FL may also be biased towards clients with higher availability in a long run. Also in Fig.~\ref{fig1}, the MDSample \cite{li2020federated} and Variance-Reduce \cite{fraboni2021clustered, balakrishnan2021diverse} methods, which do not consider the difference in client availability, introduce bias towards the clients with higher availability (i.e. Client $A$ is  overlooked at \textit{Round} $t+1$ regardless of not sampled in the previous rounds). In summary, there are significant challenges to address two competitive issues (i.e. stable model updates and bias-free training) that limit the FL training performance
under the arbitrary client availability.

To address the issues above, we propose a novel FL framework named \emph{\underline{Fed}erated \underline{G}raph-based \underline{S}ampling} (\textsc{FedGS}) to tackle the arbitrary client availability problem. We first model the data correlations of clients with a Data-Distribution-Dependency Graph (3DG) that helps keep the sampled clients data far from each other. We further minimize the variance of the numbers of times that the clients are sampled to mitigate long-term bias. Extensive experiments on three datasets under different client availability modes confirm \textsc{FedGS}'s advantage in both enabling a fair client-sampling scheme and improving the model performance under arbitrary client availability.

The contributions of this work are summarized as follow:
\begin{itemize}

\item We propose \textsc{FedGS} that could both stabilize the model update and mitigate long-term bias under arbitrary client availability. To the best of our knowledge, this is the first work that tackles the two issues simultaneously.

%addresses the client sampling bias and data type sampling bias issue at the same time, under the setting of  arbitrary client availability. we propose a novel FL framework  that selects clients for training in each round to both stabilize the global model update and mitigate long-term bias under arbitrary client availability. , which provides an effective solution without any assumption on the client availability. 

\item We propose the data correlations of clients with a Data-Distribution-Dependency Graph (3DG), which helps keep
sampled clients apart from each other, and is also dedicate to mitigate long-term bias. 

\item  We design a comprehensive set of seven client availability modes, on which we evaluate the effectiveness of \textsc{FedGS} on three datasets. We observe that \textsc{FedGS} outperforms existing methods in both client-selection fairness and model performance. 

%We mainly conclude the existing client availability modes and propose several new modes in Table 1 to comprehensively conduct experiments under arbitrary availability. 
\end{itemize}
%\textbf{First}, we model the  data correlations of clients with a Data-Distribution-Dependency Graph (3DG), with the consideration of two privacy-preserving methods (SSPP-based and model-based). In each round, we sample clients among the currently available ones while keeping the selected clients far away from each other on the 3DG, which is theoretically shown to improve the approximation to the optimal model update. \textbf{Second}, we minimize the variance of the numbers of times that the clients are sampled to mitigate the long-term bias, under the far-distance constraint (on the 3DG) of the sampled clients.

%We limit the increase of the variance of the client sampling counts to further enhance fairness in client selection and to avoid the long-term bias. 
% In addition, we formulate the sampling strategy as solving a Mixed Integer Quadratic Problem to achieve both the stable model update and fair client selection. 
\section{Background and Problem Formulation}
Given $N$ clients where the $k$th client has a local data size of $n_k$ and a local objective function $F_k(\cdot)$, we study the standard FL optimization problem as: 
\begin{align}
    \min_\theta F(\theta) =& \sum_{k=1}^N \frac{n_k}{n} F_k(\theta)
\end{align}
where $\theta$ is the shared model parameter and $n=\sum_{k=1}^N n_k$ is the total data size. 
A common approach to optimize this objective is to iteratively broadcast the global model (i.e., its learned parameter values) $\theta^t$ to all clients at each training round $t$ and aggregate local models $\{\theta^{t+1}_1,...,\theta^{t+1}_N \}$ that are locally trained by clients using SGD with fixed steps: 
\begin{align}
    \theta^{t+1} = \sum_{k=1}^{N}\frac{n_k}{n}\theta^{t+1}_k
\end{align}
When the number of clients is large, it is infeasible to update $\theta^{t+1}$ with $\theta^{t+1}_k$ from each client $k$, due to communication constraints. Sampling a random client subset $S_t \subset [N]$ to obtain an estimator of the full model update at each round becomes an attractive in this case, which is shown to enjoy convergence guarantee when the following unbiasedness condition is met \cite{li2019convergence, fraboni2021clustered}:
\begin{align}
\mathbb{E}_{S_t}\left[\theta^{t+1}\right]=\sum_{k=1}^{N}\frac{n_k}{n}\theta^{t+1}_k
\end{align}
Further, \citeauthor{fraboni2021clustered} and \citeauthor{balakrishnan2021diverse} propose to reduce the variance of the estimator as follows  to enable faster and more stable training: 
\begin{align}
Var(\nabla F_\theta) &= \|Var(\nabla F_\theta)\|_1 \\&= \mathbb{E}_t {\|\nabla F_{\theta^t} - \mathbb{E}[\nabla F_{\theta^t}]\|_2^2}
\end{align}
However, the effectiveness of these variance-reducing methods is still limited by the long-term bias caused by the arbitrary client availability as discussed earlier. 

\subsection{Mitigating long-term bias}
We first propose an objective that could mitigate the long-term bias without any assumption on the client availability. We denote the set of available clients at the round $t$ as $A_t\subseteq [N]$. Then, sampled clients should satisfy $S_t\subseteq A_t$ and $|S_t| \le M $, where $M$ is the maximum sample size limited by the server's capacity. To mitigate the impact of unexpected client availability on the sampled subset from a long-term view, we sample clients by minimizing the variance of the sampling counts of clients (i.e., the numbers of times that  the clients are sampled after $t$ rounds). Let the sampling counts of $N$ clients after $t$ rounds be $\bold{v}^t=\left[v_1^t, ..., v_N^t\right]$ where $v_k^t=\sum_{\tau=1}^{t}{\mathbb{I}(k\in S_{\tau})} = v_k^{t-1}+\mathbb{I}(k\in S_{t})$. Then, the variance of the client sampling counts after round $t$ is:
 \begin{align}
 &Var(\bold v^t)=\frac1{N-1}\sum_{k=1}^N(v_k^t-\bar{v}^{t})^2\\&=\frac{1}{N-1}\sum_{k=1}^N \left(v_k^{t-1} + \mathbb{I}(k \in S_t) - (\bar v^{t-1}+M/N)\right)^2
 \end{align}
 As discussed earlier, only balancing participating rates for clients may introduce large variance of model updates that slows down the model convergence \cite{fraboni2021clustered}. To enable a stable training, we introduce low-variance model updates as a constraint on the feasible space when minimizing the variance of sampling counts of the clients. We thus formulate our sampling optimization problem as:
%  We argued that the unknown client availability distribution will limit the effectiveness of variance-reduced methods due to the direct impact on the feasible solution space. In addition, the varying availability also influence each client's actual participation frequency, leading to arbitrarily biases of the original sampling scheme by scaling the expected weights of clients in aggregation []. 
%  [Optimal]\cite{chen2020optimal} corrects the biased sampling by scaling the aggregation weights with a factor $1/p_k$ where $p_k$ is the participating probability of each client $c_k$, however, which requires the client availability distribution is already known to the server []. Therefore, it brings non-trivial challenges to provide a solution that is robust to different client availability modes but without any prior knowledge about the distribution of client availability.

\begin{align}
& \min_{|S_t|\le M, S_t\subseteq A_t}Var(\bold v^t) \\ & \text{s.t.} \quad\quad  Var(\nabla F_{S_t}(\theta^t))\le \sigma^2
\end{align}
where $\sigma^2 \ge 0$ is a coefficient that allows to search for a trade-off between the two objectives of stable model updates and balanced  client sampling counts. When $\sigma^2\rightarrow \infty$, the optimal solution will select the currently available clients with the lowest    sampling counts. On the other hand, a small $\sigma^2$ will limit the sampled clients to those with adequately small variance of the corresponding model updates.
\section{Methodology}
This section presents our solutions to the optimization problem above (Eq. 8 and 9). The main challenge lies in converting the constraint on the variance of the global model update into a solvable one. 
% the corresponding clients are selected to participate. 
For this purpose, we utilize the data similarity between clients to increase the data diversity of the sampled subset. 
\begin{figure}
\setlength{\belowcaptionskip}{0cm}
    \centering
    \includegraphics[scale=0.45]{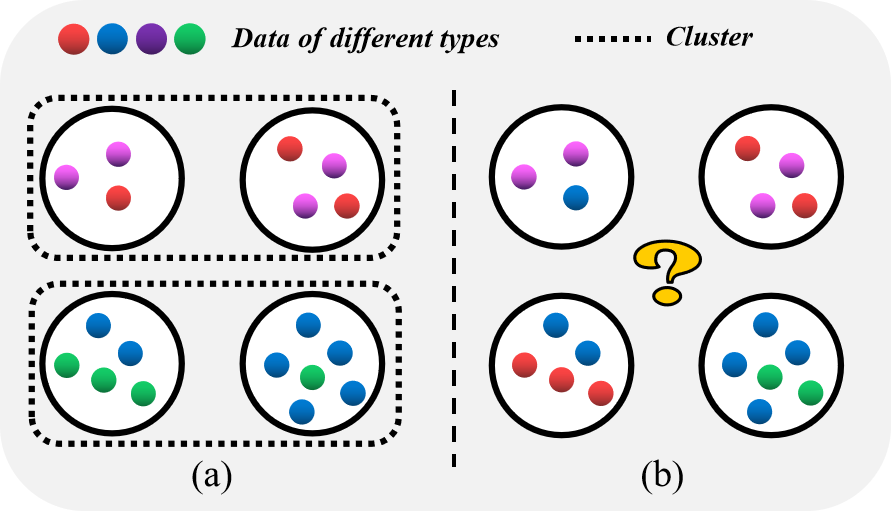}
    \caption{An example of two types of local
data distributions: (a) a simple cluster; and (b) a complex cluster.} \label{fig2}
\end{figure}
\subsection{Variance Reduction Based On 3DG}
Before illustrating our method, we briefly review previous works that address the Gradient Variance Reduction problem in FL. \citeauthor{li2020federated} add an proximal term to the local objectives to prevent the model from overfitting on the local data. \citeauthor{karimireddy2020scaffold} use control variate to dynamically correct the biased updates during local training. These two methods can avoid large model update variance by debiasing the local training procedure, which is orthogonal to the sampling strategy. \citeauthor{fraboni2021clustered} group the clients into $M$ clusters and then sample clients from these clusters without replacement to increase the chance for each client to be selected while still promising the unbiasedness of the model updates. Similarly, \citeauthor{balakrishnan2021diverse} approximates the full model updates by enlarging the diversity of the selected client set, which is done by minimizing a relaxed upper bound of the difference between the optimal full update and the approximated one. Such a relaxation aims to achieve that for each client $k\in [N]$ there exists an adequately similar client $i\in S_t$ in the sampled subset.

The existing methods work well when there are obvious clusters of clients based on their local data distributions in Fig.~\ref{fig2}(a). However, when the local data distributions are too complex to cluster like Fig.~\ref{fig2}(b), clustering the clients cannot accurately capture the implicit data correlations between clients, which may lead to performance degradation. Meanwhile, minimizing the relaxed upper is not the only means to enlarge the diversity of the sampled clients. We can achieve the same purpose without such minimization. 

To better describe the correlations among clients' local data distributions, we model the local data distribution similarities with a Data-Distribution-Dependency Graph (3DG) instead of grouping the clients into discrete clusters, as shown in Fig.~\ref{fig3}. Then, we show that keeping a large average shortest-path distance between the sampled nodes (i.e. clients) on the 3DG helps approximate the full model update. Intuitively, encouraging the sampled nodes to spread as far as possible helps differentiate the sampled local data distributions, which brings a higher probability to yield good balanced approximations for the full model update. This is proven as Theorem 1.

% To prove this, we make the assumption as follows:
% \begin{assumption}
%     The global dataset $D$ consists of $C$ types of data as $D=\cup_{c=1}^C D_{c}$, and each client's local data can be viewed as $D^{(k)} = \cup_{c=1}^C D_{c,k}$ 
% \end{assumption}
% This assumption is reasonable in practice. For example, for MNIST dataset, there are only 10 

\begin{theorem}
Suppose that there are $C$ types of data (i.e., $C$ types of labels) over all datasets, where each data type's ratio is $p_i$ such that the dataset can be represented by the vector $\bold p^*=[p_1^*,...,p_C^*],\bold 1^\top \bold p^*=1$. Without losing generality, consider $\bold p^*$ to be uniformly distributed in the simplex in $\mathbb{R}^C$, the number of local updates to be $1$, and 3DG is a complete graph. A larger distance of sampled clients on the 3DG leads to a more approximate full model update.
\end{theorem}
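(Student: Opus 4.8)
The plan is to reduce the approximation quality of the one-step model update to a statement about how close the sampled clients' aggregate label distribution is to the global label distribution $\mathbf{p}^*$, and then to show that spreading the sampled clients apart on the 3DG is precisely what drives this aggregate toward $\mathbf{p}^*$.

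First I would use the single-local-step assumption to linearize the update. With one SGD step $\theta^{t+1}_k=\theta^t-\eta\nabla F_k(\theta^t)$, both the full and the sampled updates are affine in the local gradients, so their difference is $-\eta$ times the gradient-approximation error. Writing each client's loss as a label-weighted mixture $F_k(\theta)=\sum_{i=1}^C p_{k,i}\,\ell_i(\theta)$ --- the natural decomposition when data of a given type is statistically interchangeable across clients --- makes the gradient linear in the label vector, $\nabla F_k=\sum_i p_{k,i}\nabla\ell_i$. The global gradient is then $\sum_i p_i^*\nabla\ell_i$ and the sampled gradient is $\sum_i \bar p^S_i\nabla\ell_i$, so the update error is bounded by a constant (depending only on the $\nabla\ell_i$) times $\|\bar{\mathbf p}^S-\mathbf p^*\|$. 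This isolates $\|\bar{\mathbf p}^S-\mathbf p^*\|$ as the one quantity to control.

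Second I would establish the exact identity linking this error to pairwise distances. Writing $\mathbf e_j=\mathbf q_j-\mathbf p^*$ for the deviation of each sampled client's label vector from the global one, a direct expansion of $\|\frac1M\sum_j\mathbf e_j\|^2$ gives
\begin{align}
\|\bar{\mathbf p}^S-\mathbf p^*\|^2=\frac1M\sum_{j}\|\mathbf e_j\|^2-\frac1{2M^2}\sum_{i\ne j}\|\mathbf q_i-\mathbf q_j\|^2 .
\end{align}
Since the 3DG is a complete graph whose edge lengths encode label-distribution dissimilarity, the shortest-path distance between two sampled clients is exactly $\|\mathbf q_i-\mathbf q_j\|$, so the second sum is the total pairwise 3DG distance $D$ of the sampled set. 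Holding the total spread $\sum_j\|\mathbf e_j\|^2$ fixed, increasing $D$ strictly decreases the approximation error.

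The main obstacle is controlling the first term $\sum_j\|\mathbf e_j\|^2$, which could a priori grow together with $D$ and destroy the monotonicity. This is exactly where the uniform-over-the-simplex assumption enters: under it every sampled label vector has the same expected squared distance from the centroid $\mathbf p^*=(1/C,\dots,1/C)$, so $\frac1M\mathbb E[\sum_j\|\mathbf e_j\|^2]$ is a constant independent of the chosen clients, leaving $\mathbb E[D]$ as the only term the sampler can move. Taking expectations in the identity gives $\mathbb E\|\bar{\mathbf p}^S-\mathbf p^*\|^2=\mathrm{const}-\frac{1}{2M^2}\mathbb E[D]$, so maximizing the expected total 3DG distance of the sampled clients minimizes the expected update error, which is the claim. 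I would close by noting that boundedness of the simplex makes this robust even without expectations: since $\sum_j\|\mathbf e_j\|^2$ is uniformly bounded, pushing $D$ toward its maximum necessarily drives $\|\bar{\mathbf p}^S-\mathbf p^*\|^2$ down.
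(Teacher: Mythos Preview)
Your approach is genuinely different from the paper's, but it contains a real gap at the step where you invoke the uniform-on-the-simplex assumption. The hypothesis is that the \emph{global} vector $\mathbf p^*$ is uniformly distributed on the simplex; the clients' label vectors $\mathbf q_j$ are fixed data. Taking the expectation over $\mathbf p^*$ in your first term gives $\mathbb E_{\mathbf p^*}\|\mathbf q_j-\mathbf p^*\|^2=\|\mathbf q_j\|^2-\tfrac{2}{C}+\mathbb E\|\mathbf p^*\|^2$, which still depends on $\|\mathbf q_j\|^2$ and hence on which clients are sampled. So the ``constant first term, only $D$ moves'' conclusion does not follow, and without it the identity you derived cannot separate the effect of $D$ from the effect of $\sum_j\|\mathbf e_j\|^2$. (Your parenthetical identification $\mathbf p^*=(1/C,\dots,1/C)$ also conflates the random $\mathbf p^*$ with its mean.)

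For comparison, the paper does not try to control $\|\bar{\mathbf p}^S-\mathbf p^*\|$ for a fixed aggregation. Instead it allows arbitrary aggregation weights $\mathbf w$ and asks for the probability, over the random $\mathbf p^*$, that $\mathbf p^*$ lies in the convex cone spanned by the (normalized) sampled distributions $\mathbf P_t$; this probability is exactly Ribando's normalized solid angle $\tilde V_\Omega=\pi^{-C/2}|\det\mathbf P_t|\int_{\mathbb R^C_{\ge0}}e^{-\mathbf u^\top\mathbf P_t^\top\mathbf P_t\mathbf u}d\mathbf u$. The proof then argues that both factors grow as the off-diagonal inner products $\mathbf e_i^\top\mathbf e_j$ shrink, which is equivalent (via $\|e_i-e_j\|^2=2-2e_i^\top e_j$) to the average 3DG distance growing. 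Thus the uniform assumption is used to turn ``approximation quality'' into a volume/probability question, not to neutralize your first term; if you want to rescue your decomposition route, you would need a different device---e.g.\ working with normalized $e_j=\mathbf q_j/\|\mathbf q_j\|$ so that $\|e_j\|^2\equiv1$, and then comparing cones rather than a fixed average.
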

\begin{proof}
See Appendix A.
\end{proof}

Although the proof is based on that 3DG is a complete graph, we empirically show that keeping the clients far away from each others on the 3DG can benefit FL training even when this assumption is broken.

\begin{figure}
\setlength{\belowcaptionskip}{0cm}
    \centering
    \includegraphics[scale=0.16]{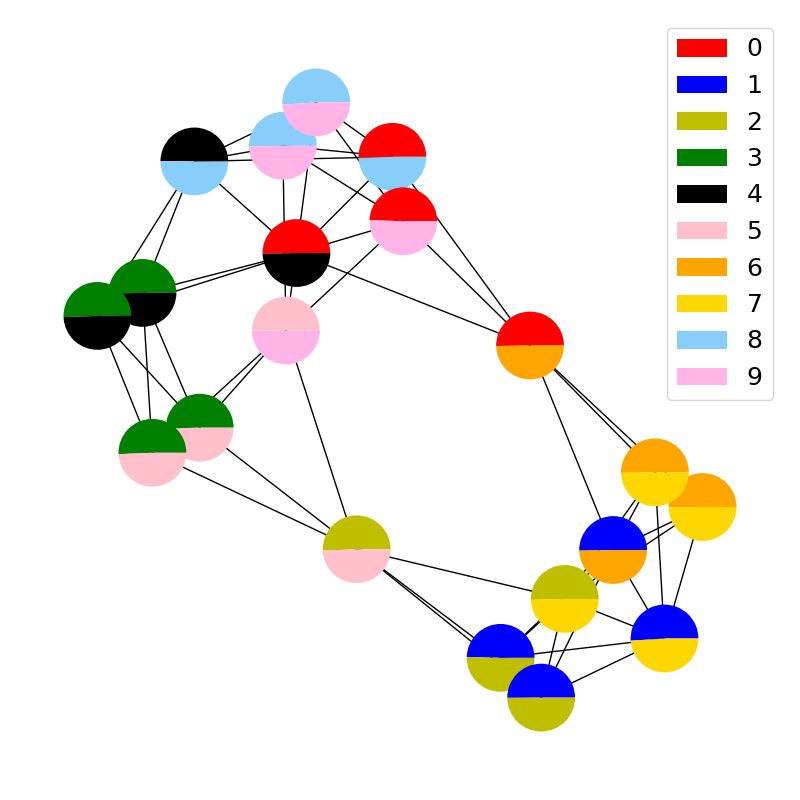}
    \caption{A visualized example of the oracle 3DG generated on CIFAR10 partitioned by 20 clients and each of them owns data with only two labels. The different color means the ratio of corresponding labels in their local dataset.} \label{fig3}
\end{figure}
\subsection{Construction of 3DG in FL}
Now we discuss how to construct the 3DG. A straightforward approach is to directly calculate the distance (e.g., KL divergence) between different local data distributions, which is infeasible in FL because the clients do not share their local data. Without loss of generality, we assume that there is a feature vector $\bold u_k \in \mathbb{R}^{d}$ that can well represent the information about the local data distribution of each client $c_k$. We argue that this is achievable in practice. For example, training an ML model for tasks of supervised learning (e.g., classification) usually face severe data heterogeneity in FL, where there may exist label skewness in clients' local data (e.g. each client only owns data with a subset of labels). In this case, the label distribution vectors (i.e. the number of items of each label) can well reflect the bias of each client's local data. Once given the feature vectors $\bold U = [\bold u_1, \bold u_2,...,\bold u_N]$, we can easily calculate the similarity between any two clients $c_i$ and $c_j$ with a similarity function $f_{sim}:\mathbb{R}^{d}\times \mathbb{R}^{d}\rightarrow [0,1]$  as: 
\begin{align}
    \bold V = [V_{ij}]_{N\times N}, V_{ij} = f_{sim}(\bold u_i, \bold u_j)
\end{align}
Then, the similarity matrix $ \bold{V} $ can be converted into an adjacent matrix $ \bold{R} $ for the 3DG over the clients for the 3DG over the clients by:

\begin{align*}
\begin{split}
R_{ij} = \left \{
\begin{array}{ll}
    0,&i=j\\
    e^{-V_{ij}/\sigma^2}&i\ne j,V_{ij}\ge\epsilon \\
    \infty,&i\ne j,V_{ij}<\epsilon
\end{array}
\right.
\end{split}
\end{align*}

where $\epsilon>0$ is a positive threshold used to control the sparsity of the adjacent matrix and $\sigma$ controls the diversity of the edge weight. A large value of $\sigma$ will lead to small difference between the edge weights.

The feature vector $\bold u_k$ can also leak sensitive information about the clients, and it may not be exposed to the other clients or the server. It is necessary for the server to obtain the similarities between clients in  privacy-preserving manner to reconstruct or accurately approximate the \emph{oracle 3DG} (i.e., the 3DG corresponding to the true features $\bold U$). To achieve this goal, we present two methods that can help the server construct the 3DG.

The first is to use techniques based on Secure Scalar Product Protocols (SSPP) \cite{wang2009toward, shundong2021secure}, which aims to compute the dot product of private vectors of parties. We argue that any existing solutions of SSPP can be used to reconstruct the similarity matrix $\bold V$ of clients in our settings, and we detail one of the existing solutions for scalar product protocal \cite{du2002building} with a discussion on how to adapt it to build the 3DG in the Appendix D.

Although the SSPP-based methods can construct the oracle 3DG without any error, it cannot be adapted to the case where the feature vectors are difficult or impossible to obtain. In addition, the SSPP-based method only applies when the similarity function $f_{sim}$ is a simple dot product. 

We propose a second method that computes the similarity between clients based on their uploaded model parameters. Given the locally trained models $\theta_i^{t+1}$ and $\theta_j^{t+1}$, a straightforward way to calculate the similarity between the two clients is to compute the cosine similarity of their model updates \cite{xu2020reputation, xu2021gradient}
\begin{align}
 V_{ij} = \max(\frac{\Delta \theta_i^{t\top}\Delta \theta_j^t}{\|\Delta \theta_i^t\|\|\Delta \theta_j^t\|},0)
\end{align}
where $\Delta \theta_i^{t\top}=\theta_i^{t+1} - \theta^t$.
 However, since the model parameters' update is usually of extremely high dimensions but low rank \cite{azam2021recycling}, the direct cos similarity may contain too much noise, causing inaccuracy when constructing the 3DG. Motivated by \cite{baek2022personalized}, we instead compute the functional similarity based on the model parameters  to overcome the problem above. We first feed a batch of random Gaussian noise $\epsilon\sim\mathcal{N}(\mu,\Sigma)$ to all the locally trained models, where $\mu$ and $\Sigma$ are respectively the mean and covariance of a small validation dataset owned by the server \cite{zhao2018federated}. Then, we take the average of the $l$th layer's network embedding on this batch for each client to obtain $e_i$, and we compute the  similarity as:
\begin{align}
e_i = \theta_i(\epsilon)[l], V_{ij} =\max(\frac{e_i^\top e_j}{\|e_i\|\|e_j\|},0)
\end{align}
where we set $l$ as the output layer in practice. 

Another concerning issue is that the server may not have access to all the clients' feature vectors during the initial phase. As a result, the adjacent matrix of clients may need to be dynamically built and polished round by round. Nevertheless, we emphasize that we are not trying to answer how to optimally capture the correlations between clients' local data distributions in FL. Instead, we aim at showing that the topological correlations of clients' local data can be utilized to improve the training process of FL, and we key how to build the optimal 3DG for as our future work. 

For convenience, we simply assume that all the clients are available at the initial phase, by which the server can obtain the 3DG just before training starts. We empirically show the effectiveness of the approximated 3DG in Sec. 4.4.

\subsection{\textsc{FedGS}}
We now present our proposed Federated Graph-based Sampling (\textsc{FedGS}) method. As mentioned in Sec. 3.1, we bound the variance of the global model update at each round by keeping a larger average shortest-path distance between each pair of sampled clients. Given a 3DG, we first use the Floyd–Warshall algorithm to compute the shortest-path distance matrix $\bold H=[h_{ij}]_{N\times N}$ for all pairs of clients. Let $s_k^t\in\{0,1\}$ be a binary variable, where $s_k^t=1$ means client $c_k$ is selected to participate training round $t$ and $s_k^t=0$ otherwise. Then, the sampling result in round $t$ can be $\bold s_t=[s_1^t,...,s_N^t]\in \{0,1\}^N$, where the average shortest-path distance between sampled clients is written as:     
\begin{align}
  g(S_t)= \frac{2}{N(N-1)}\sum_{i,j\in S_t, i\ne j}h_{ij}s_i^t s_j^t = \frac{\bold s_t^{\top}{\bold H}\bold s_t}{N(N-1)}
\end{align}
 Accordingly, we replace the constraint in Equation (9) with $g(S_t)\ge \alpha$, and we convert this constraint into a penalty term added into the objective. After rewriting the equation (8) to be a maximization problem based on $s_t$, we obtain: 
 
\begin{align}
\max_{ \bold s_t\le \bold a^t, s_k^t \in \{0,1\}}\frac{\alpha \bold s_t^{\top}{\bold H}\bold s_t }{N(N-1)}- \frac 1{N-1}\bold{z}^\top \bold s_t
\\ \text{s.t. }\quad\quad\bold 1^\top\bold s_t=min(M, |A_t|)
\end{align}

where $z_k=2(v_k^{t-1}-\bar v^{t-1}-M/N)+1$, $\bold a^t= \{a_1^t, ...,a_N^t\}\in \{0,1\}^N $ and $a_k^t=1$ means client $c_k$ is available in round $t$. Note that $s_k^t=0$ for the clients unavailable clients in round $t$ and $s_k^{t2}=s_k^t$. Thus, Equation (14) can be reduced to: 
\begin{align}
\max_{\tilde{s}_k^{t} \in \{0,1\}}\bold{ \tilde{s}}_t^{\top}\left(\frac{\alpha}{N}\tilde{\bold H} - \textbf{diag}(\tilde{\bold{z}})\right)\bold{ \tilde{s}}_t\\
\text{s.t.       }\quad\bold 1^\top\bold{ \tilde{s}}_t=min(M, |A_t|)
\end{align}
$\bold{ \tilde{s}}_t=[s_{i1}^{t},...,s_{i|A_t|}^{t}]\in \{0,1\}^{|A_t|}$ and $s_{ij}^{t}=1$ represents that the $j$th client in the available set is selected. $\tilde{\bold{z}}\in \mathbb{R}^{|A_t|}$ and $\tilde{\bold H}\in \mathbb{R}^{|A_t|\times |A_t|}$ also only contains the element where the corresponding clients are available in round $t$.

This rewritten problem is a constrained mixed integer quadratic problem, which is a variety of an NP-hard problem, \textit{p-dispersion} \cite{pisinger1999exact}, with a non-zero diagonal. We optimize it to select clients within a fixed upper bound of wall-clock time. We empirically show that a local optimal can already bring non-trivial improvement when the client availability varies. 
\subsubsection{Aggregation Weight.}
Instead of directly averaging the uploaded model parameters like \cite{balakrishnan2021diverse, li2020federated}, We normalize the ratio of the local data size of selected clients as weights of the model aggregation:
\begin{align}
    \theta^{t+1} = \sum_{k\in S_t}\frac{n_k}{\sum_{i\in S_t}n_i} \theta^{t+1}_k
\end{align}
We argue that this is reasonable in our sampling scheme. Firstly, \textsc{FedGS} forces to balance the sampling counts of all the clients regardless of their availability. Thus, for convenience, we simply assume that all the clients will be uniformly sampled with the same frequency $\frac{MT_c}{N}$ in every $T_c$ rounds, and that the size of the set of available clients $|A_t|$ is always larger than the sample size limit $M$ in each round $t$. By treating the   frequency $\frac{MT_c}{N}/T_c=M/N$ as the probability of each client being selected without replacement in each round $T_0+\tau$, $(\tau\le T_c)$, we obtain:
\begin{align}
\mathbb{E}_{S_t}[\theta_{t+1}] &= \mathbb{E}_{S_t}\left[\frac MN\sum_{k=1 }^N\frac{n_k}{n_k+\sigma(S_{t},k)} \theta^{t+1}_k\right]\\&=\frac MN\sum_{k=1 }^N\frac{n_k}{n_k+\sigma_k} \theta^{t+1}_k
\end{align}
where $\sigma_k=\mathbb{E}_{S_t}[\sigma(S_{t},k)]=\mathbb{E}_{S_t}[\sum_{j\in S_t, j\ne k}n_j] = \frac{M-1}{N-1}(n-n_k)$. Therefore, the expected updated model of the next round follows:
\begin{align}
\mathbb{E}_{S_t}[\theta_{t+1}] &= \frac MN\sum_{k=1 }^N\frac{n_k}{n_k+\frac{M-1}{N-1}(n-n_k)} \theta^{t+1}_k\\
&=\sum_{k=1 }^N\frac{n_k}{n}\frac{1}{1+\frac1M \frac{N-M}{N-1}\frac{n_k-\bar n}{\bar n}} \theta^{t+1}_k\\
&=\sum_{k=1 }^N\frac{n_k}{n}\gamma_k \theta^{t+1}_k
\end{align}

From Equation (22), we see that the degree of data imbalance will impact the unbiasedness of the estimation. When the data size is balanced as $n_k=\bar n,\forall k\in[N]$, the estimation is unbiased since $\gamma_k=1,\forall k\in[N]$. If the data size is imbalanced, the degree of data imbalance will only have a controllable influence on the unbiasedness with the ratio of each client's local data's size to the average data size $\frac{\|n_k-\bar n\|}{\bar n}$. This impact can be 
immediately reduced by increasing the number of sampled clients $M$ at each round. 

The analysis above is based on the assumption that our proposed \textsc{FedGS} can well approximate the results obtained by uniform sampling without replacement in ideal settings. Generally speaking, a small $Var(\bold v^t)$ will limit the difference between the sampling counts, which is also empirically verified by our experimental results. The pseudo codes in Algorithm 1 summarizes the main steps of \textsc{FedGS}.
\begin{algorithm}[tb]
    \caption{Federated Graph-Based Sampling}
    \label{alg:1}

    \textbf{Input}:The global model $\theta$, the feature matrix of clients' data distribution $\bold U$, the maximum wall-clock time of the solver $\tau_{max}$, the sizes of clients' local data $n_k$, the number of local updating steps $E$, and the learning rate $\eta_t$\\
    \begin{algorithmic}[1]
    \STATE Initialize the global model parameters $\theta_0$ and the sampling counts of clients $\bold v^0=[0,...,0]\in \mathbb{N}^{N}$.
    \STATE Create the 3DG $\bold G$ based on the techniques in Sec. 3.2.
    \STATE Compute the shortest-path distance of each pair of nodes on 3DG by Floyd Algorithm to obtain $\bold H$.
    
    \FOR{ communication round $t = 0,1,...,T-1$}
    \STATE The server checks the set of available clients $A_t$.
    \STATE The server uses $\bold v^t$ and $\bold H$ to solve equation (16) within the maximum wall-clock time $\tau_{max}$ to obtain the sampled client set $S_t\subseteq A_t$
    \STATE The server broadcasts the model $\theta^t$ to clients in $S_t$.
    \FOR{ each client $k\in S_t$}
    \FOR{ each iteration $i=0,1,...,E-1$}
    \STATE $\theta_{k,i+1}^{t}\leftarrow \theta_{k,i}^{t}-\eta_t\nabla F_k(\theta_{k,i}^{t})$ 
    \ENDFOR
    \STATE Client $k$ send the model parameters $\theta_{k}^{t+1}=\theta_{k,E}^{t}$ to the server.
    \ENDFOR
    \STATE The server aggregates the received local model parameters $    \theta^{t+1} = \sum_{k\in S_t}\frac{n_k}{\sum_{i\in S_t}n_i} \theta^{t+1}_k$ 
    \STATE The server updates the    sampling counts of clients $\bold v^{t+1}[k]\leftarrow \bold v^{t}[k] + \mathbb{I}(k\in S_t)$
    \ENDFOR
    \end{algorithmic}
    \end{algorithm}
    
\begin{table*}[t]
\normalsize
\renewcommand{\arraystretch}{1.7}  % 行高1.7倍
\centering
\resizebox{\linewidth}{!}{  % 放缩整张表格
\begin{tabular}{c|c|c|c|c|c} 
\hline
\multirow{2}{*}{\textbf{Name}} & \multirow{2}{*}{\textbf{Description}} & \multicolumn{3}{c|}{\textbf{Dependency}} & \multirow{2}{*}{\textbf{Active Rate}}  \\ 
\cline{3-5}
 & & \textbf{Time} & \textbf{Data} & \textbf{Other} &        \\ 
 \hline
\textbf{ID}ea\textbf{L} & Full client availability & - & - &  - &  $1$   \\
\hline
 \textbf{M}ore\textbf{D}ata\textbf{F}irst (Ours)  & More data, higher availability & - & data size $n_k$ &  -         &  $\frac{n_k^\beta}{\max_i n_i^\beta}$   \\
 \hline
 \textbf{L}ess\textbf{D}ata\textbf{F}irst (Ours) & Less data, higher availability  &-& data size $n_k$ & -          &  $\frac{n_k^{-\beta}}{\max_i n_i^{-\beta}}$        \\
 \hline
  \textbf{YM}ax\textbf{F}irst \cite{gu2021fast}     & Larger value of label, higher availability  &-& value of label set $\{y_{ki}\}$  &  -& $\beta\frac{\min_i\{y_{ki}\}}{\max_{c,j}\{y_{cj}\}}+(1-\beta)$                         \\
 \hline
  \textbf{YC}ycle (Ours)   & Periodic availability with label values   &round $t$ &value of label set $\{y_{ki}\}$&-& $\beta\mathbb{I}\left( \cup_{y_{ki}}\frac{1+(t\%T_p)}{T_p}\in [\frac{y_{ki}}{num_Y},\frac{y_{ki}+1}{num_Y})\right)+(1-\beta)$   \\
 \hline
  \textbf{L}og \textbf{N}ormal \cite{ribero2022federated}  &   Independent availability obeying \textit{lognormal}&-&-&$c_k\sim\text{lognormal}(0,ln\frac{1}{1-\beta})$ & $\frac{c_k}{\max_i c_i} $   \\
 \hline
  \textbf{S}in \textbf{L}og \textbf{N}ormal \cite{ribero2022federated} &\textit{Sin}-like intermittent availability of   with \textbf{LN} &round $t$  &  -  & $c_k\sim\text{lognormal}(0, ln\frac{1}{1-\beta})$ &  $\frac{c_k}{\max_i c_i}\left(0.4sin\left(\frac{1+(t\%T_p)}{T_p}2\pi\right)+0.5\right)    $  \\
  \hline
\end{tabular}
}
\caption{An Overview of Different Client Availability Modes.}
\label{table1}
\end{table*}

\begin{table*}\scriptsize
\renewcommand{\arraystretch}{1}  % 行高1.3倍
\centering
\begin{tabular}{l|l|l|l|l|l|l|l|l|l|l|l|l|l} 
\hline
\textbf{Dataset}         & \multicolumn{5}{c|}{\textbf{ Synthetic(0.5, 0.5)}}                                                                                                                               & \multicolumn{5}{c|}{\textbf{CIFAR10}}                                                                                                                                            & \multicolumn{3}{c}{\textbf{FashionMNIST}}                            \\ 
\hline
\textbf{Availability}    & \multicolumn{1}{c|}{\textbf{IDL}} & \multicolumn{1}{c|}{\textbf{LN}} & \multicolumn{1}{c|}{\textbf{SLN}} & \multicolumn{1}{c|}{\textbf{LDF}} & \multicolumn{1}{c|}{\textbf{MDF}} & \multicolumn{1}{c|}{\textbf{IDL}} & \multicolumn{1}{c|}{\textbf{LN}} & \multicolumn{1}{c|}{\textbf{SLN}} & \multicolumn{1}{c|}{\textbf{LDF}} & \multicolumn{1}{c|}{\textbf{MDF}} & \multicolumn{1}{c|}{\textbf{IDL}}& \multicolumn{1}{c|}{\textbf{YMF}} & \multicolumn{1}{c}{\textbf{YC}}  \\ 
\hline
\textbf{UniformSample}   &0.302&0.320&0.324&0.330&0.362&0.975&1.042&1.038&1.049&0.999&0.315&0.331&0.333\\ 
\hline
\textbf{MDSample}        &0.302& 0.322 &0.328&0.328&0.326 &0.971&1.037&1.048&1.051&0.991&0.315&0.333&0.338\\ 
\hline
\textbf{Power-of-Choice} &0.691&0.362&0.352&0.557&\textbf{0.301}&1.287&1.108&1.078&1.267&1.026&0.345&0.326&0.311\\  
\hline
\textbf{\textsc{FedProx} $\mu=0.01$} &\textbf{0.301}&0.346&0.376&0.319&0.410&0.972&1.056&1.162&1.039&0.995&0.315&0.331&0.374\\ 
\hline
\textbf{\textsc{FedGS} $\alpha=0.0$} &0.307&\textbf{0.305}&0.320&0.309&\underline{0.310}&1.006&0.976&1.002&0.974&0.967&0.302& 0.310&0.324\\ 
\hline
\textbf{\textsc{FedGS} $\alpha=0.5$} &0.311 & 0.306&0.319& \textbf{0.308}&0.311&0.977&0.973&1.000&\textbf{0.966}&0.974&\textbf{0.299}&0.312&0.312\\ 
\hline
\textbf{\textsc{FedGS} $\alpha=1.0$} &0.328 &0.306&\textbf{0.318}&\textbf{0.308}&0.311&\textbf{0.968}&\textbf{0.972}&\textbf{0.996}&0.971&\textbf{0.963}&0.300& 0.308& 0.310\\ 
\hline
\textbf{\textsc{FedGS} $\alpha=2.0$} &\underline{0.306}&0.307&0.320&\textbf{0.308}&0.311&0.970&0.975&1.001&0.973&0.969&0.310&\textbf{0.303}&0.312\\ 
\hline
\textbf{\textsc{FedGS} $\alpha=5.0$} &0.317 & 0.307&0.321&0.309&0.311&0.976&0.974&0.996&0.972&0.972&0.307&\textbf{0.303}&\textbf{0.307}\\ 
\hline
\end{tabular}
\caption{The optimal testing loss of methods running under different client availability modes on three datasets. Each result in the table is averaged over 3 different random seeds.}
\label{table2}
\end{table*}

\section{Experiment}
\subsection{Experimental Setup}
\subsubsection{Datasets and models to be trained}
We validate \textsc{FedGS} on three commonly used federated datasets: \textbf{Synthetic (0.5, 0.5)} \cite{li2020federated}, \textbf{CIFAR10} \cite{krizhevsky2009learning} and \textbf{FashionMNIST} \cite{xiao2017fashion}. For Syntetic dataset, we follow the settings use by \cite{li2020federated} to generate imbalance and non-i.i.d. dataset with 30 clients. 
% Synthetic(0.5,0.5) with totally 30 clients, where there simultaneously exist feature skew, concept skew and data imbalance.
For CIFAR10, we unequally partition the dataset into 100 clients following the label distribution  $Y_k\sim Dir(\alpha p)$ \cite{measure} ($p$ is the global label distribution).
% we split the dataset into 100 partitions, where the label distribution of each client  and the local data size distribution $n_k\sim LogNormal(\frac{\|D_{CIFAR10}\|}{N}-0.5, 1)$ \cite{li2020federated}. 
For FashionMNIST, we balance the data sizes for 100 clients, each of whom owns data of only two labels. We train a logistic regression model for Synthetic(0.5, 0,5) and CNN models for CIFAR10 and FashionMNIST. More details on datasets are in Appendix C.
\subsubsection{Client Availability}
% For example, a client owning more data may have a poor connectivity due to its higher frequency of using devices. Additionally, the active probability can also be time-varying, where the available time of devices owned by people work at different time of a day will be significantly different. Motivated by these, 
% 
We first review the client availability settings discussed in existing FL literature \cite{ribero2022federated, gu2021fast}, a common way is to allocate an active probability to each client at each round. We observe that the client's active probability may depend on data distribution or time \cite{ribero2022federated, gu2021fast}.
% For example, a client owning more data may have a poor connectivity due to its higher frequency of using devices. Additionally, the active probability can also be time-varying, where the available time of devices owned by people work at different time of a day will be significantly different. Motivated by these,
We mainly conclude the existing client availability modes and propose a comprehensive set of seven client availability modes in Table \ref{table1} to conduct experiments under arbitrary availability. For each mode, we set a coefficient $\beta\in [0,1]$ to control the degree of the global unavailability, where a large value of $\beta$ suggests a small chance for most devices to be active. A further explanation about these availability modes is provided in Appendix C, where we also visualize the active state of clients at each round.
% We conclude several existing client availability modes and propose reasonable new modes to comprehensively conduct experiments under arbitrary availability. For each mode, we set a coefficient $\alpha\in [0,1]$ to control the degree of poor connectivity of clients, where a large value of $\alpha$ suggests a small chance for most devices to be active. We conclude the information about these modes in Table 1. 
\subsubsection{Baselines}
We compare our method \textsc{FedGS} with: (1) \textsc{UniformSample} \cite{mcmahan2017communication}, which samples available clients uniformly without replacement, (2) \textsc{FedProx}/\textsc{MDSample} \cite{li2020federated}, which samples available clients with a probability proportion to their local data size and trains w/wo proximal term, (3) \textsc{Power-of-Choice} \cite{cho2020client}, which samples available clients with top-$M$ highest loss on local data and is robust to the client unavailability. Particularly, all the reported results of our \textsc{FedGS} are directly based on the oracle 3DG. We put results obtained by running \textsc{FedGS} on the constructed 3DG in the Appendix B.

\subsubsection{Hyper-parameters}
For each dataset, we tune the hyper-parameters by grid search with FedAvg, and we adopt the optimal parameters on the validation dataset of FedAvg to all the methods. The batch size is $B=10$ for Synthetic and $B=32$ for both CIFAR10 and FashionMNIST. The optimal parameters for Synthetic, Cifar10, FashionMNIST are resepctively $\eta=0.1,E=10$, $E=10, \eta=0.03$ and $E=10,\eta=0.1$. We round-wisely decay the learning srate by a factor of $0.998$ for all the datasets.
More details about the hyper-parameters are put in Appendix C.

\subsubsection{Implementation} All our experiments are run on a 64 GB-RAM Ubuntu 18.04.6 server with Intel(R) Xeon(R) CPU E5-2630 v4 @ 2.20GHz and 4 NVidia(R) 2080Ti GPUs. All code is implemented in PyTorch 1.12.0.

\subsection{Results of Impact of Client Availability}
We run experiments under different client availability modes to study the impact of arbitrary client availability, where the main results are shown in Table 2. Overall, the optimal model performance measured by the test loss is impacted by the client availability modes for all methods and over all three datasets. On Synthetic, UniformSample suffers the worst model performance degradation, $19.8\%$ (i.e. \textbf{IDL} v.s. \textbf{MDF}), while that for MDSample is $8.6\%$ (i.e. \textbf{IDL} v.s. \textbf{LDF}). \textsc{FedGS} retrains a strong model performance, with a degradation no more than $5\%$ for all the values of $\alpha$. Further, our proposed \textsc{FedGS} achieves the best performance under all the availability modes except for \textbf{IDL} and  \textbf{MDF}. For \textbf{IDL}, our \textsc{FedGS} still yields competitive results comparing with UniformSample and MDSample (0.306 v.s. 0.302). For \textbf{MDF}, although Power-of-Choice achieves the optimal result, its model performance is extremely unstable across different availability modes, e.g., its model performance is almost $200\%$ worse than the others for \textbf{IDL}. Meanwhile, \textsc{FedGS} still achieves $4.9\%$ improvement over MDSample and $14.3\%$ over UniformSample in this case. For CIFAR10 and FashionMNIST, all the optimal results fall into the region runned with \textsc{FedGS}. Using $\alpha>0$ brings a non-trivial improvement over using $\alpha=0$ in most of the client availability modes, which suggests that the variance reduction of \textsc{FedGS} can also benefit FL training.

\begin{figure}
\setlength{\abovecaptionskip}{0cm}
\setlength{\belowcaptionskip}{0cm}
    \centering
    \includegraphics[scale=0.34, trim=0 0 0 0, clip]{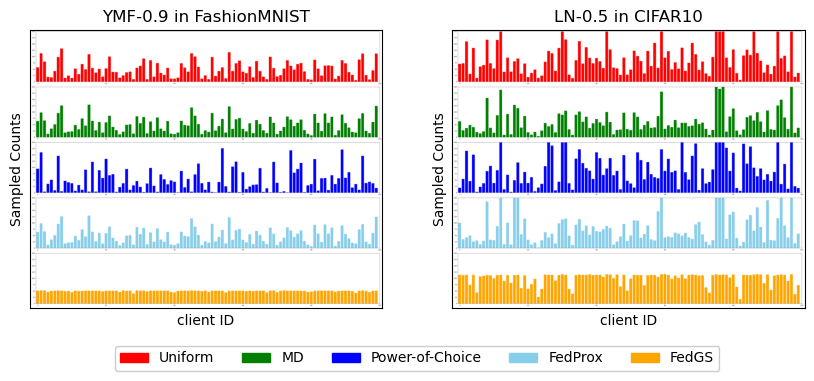}
    \caption{The results of final client sampling counts on FashionMNIST-YMF-0.9 and Cifar10-LN-0.5.} \label{fig4}
\end{figure}

\begin{figure}
\setlength{\abovecaptionskip}{0cm}
\setlength{\belowcaptionskip}{0cm}
\centering
    {%
        \includegraphics[width = .48\linewidth]{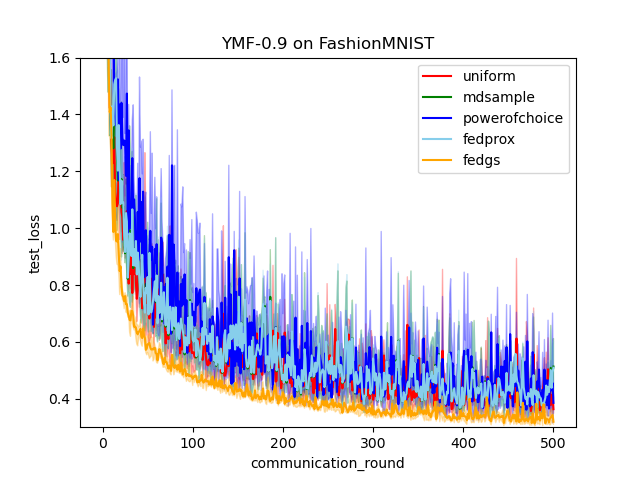}
        \includegraphics[width = .48\linewidth]{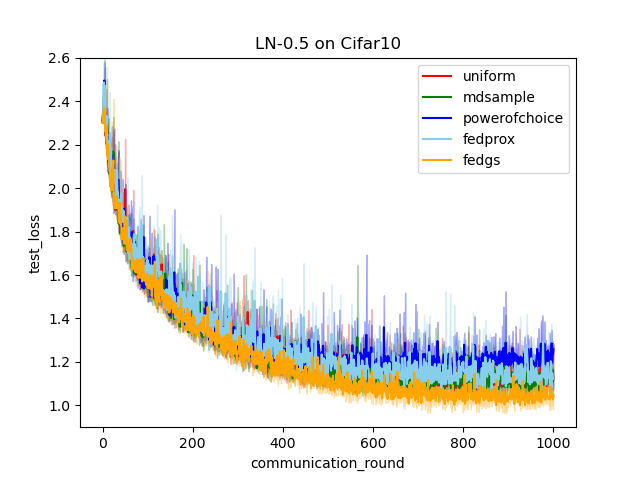}}

\caption{The testing loss curve respectively on FashionMNIST-YMF-0.9 and CIFAR10-LN-0.5.\label{fig:5}}
\end{figure}
\subsection{Results of Client Selection Fairness}

As shown in Fig.~\ref{fig4}, $\textsc{FedGS}_{\alpha=1}$ can substantially enhance fairness in client selection, which results in a uniform distribution of client sample counts both on FashionMNIST with YMF-0.9 and CIFAR10 with LN-0.5. We also show the corresponding curves of test loss in Fig.~\ref{fig:5}, which show that \textsc{FedGS} can stabilize FL training and find a better solution.
\subsection{Effectiveness of 3DG Construction}
    To validate the effectiveness of our method to reconstruct the oracle 3DG based on the functional similarity of model parameters, we use the F1-score of predicting the edges in the oracle 3DG to measure the quality of the construction, and we compare results with those obtained using the cosine similarity. The oracle 3DG is generated with $\epsilon=0.1$ and $\sigma^2=0.01$. Considering the difference in the feature space of the oracle and those of the model-based methods, we vary the value of $\epsilon\in\{0,0.01,0.05,0.1,0.5\}$ and report the results with the highest F1-score for each method. Results in Table 3 confirm the effectiveness of the proposed method to approximate the oracle 3DG, where the functional-similarity method achieves a higher F1-score than the cos-similarity method on both datasets. The results of \textsc{FedGS} running on the 
   model-based 3DG are included in Appendix B.

\begin{table}\scriptsize
\setlength{\belowcaptionskip}{0cm}
\centering
\begin{tabular}{l|l|l|l|l} 
\hline
\textbf{Dataset}   & \textbf{Method}   & \textbf{Precision} & \textbf{Recall} & \textbf{F1-Score}  \\ 
\hline
\multirow{2}{*}{\textbf{CIFAR10}} & \textit{functional similarity} & 0.8789  & \textbf{0.8700}  & \textbf{0.8744} \\ 
\cline{2-5}
                                  & \textit{cosine similarity}        & \textbf{1.0000} & 0.1316  & 0.2327              \\ 
\hline
\multirow{2}{*}{\textbf{FashionMNIST}} & \textit{functional similarity} & \textbf{0.9761}  & 0.7097  & \textbf{0.8218}              \\ 
\cline{2-5}
                                  & \textit{cosine similarity}   & 0.3765  & \textbf{0.9853} & 0.5448             \\
\hline
\end{tabular}
\caption{The effectiveness of how to construct 3DG.}
\label{table3}
\end{table}

\section{Related Works}
\subsection{Client Sampling in FL}
Client sampling is proven to has a significant impact on the stability of the learned model \cite{cho2020client}. \cite{mcmahan2017communication} uniformly samples clients without replacement to save communication efficiency. \cite{li2020federated} samples clients proportion to their local data size and uniformly aggregate the models with full client availability. \cite{fraboni2021clustered} reduces the variance of model to accelerate the training process. Nevertheless, these works ignored the long-term bias introduced by arbitrary client availability, which will result in the model overfitting on a particular data subset. Recent works \cite{ribero2022federated, gu2021fast, huang2020efficiency} are aware of such long-term bias from the arbitrary availability of clients. However, these two competitive issues (e.g. stable model updates and bias-free training) have not been considered simultaneously.

% Sampling strategies were proposed to reduce the gradient variance \cite{balakrishnan2021diverse, fraboni2021clustered}. However, these works have assumed the full availability of clients. There are also a few works tackling the unavailability of clients. \cite{yan2020distributed} propose to sample the available clients with the longest periods of not being selected and use all the clients’ latest gradients to update the model.  \cite{gu2021fast} utilize the latest gradients of clients in the same way but always selects all the available clients, which ignores the limited communication resources of the server. More recently, \cite{ribero2022federated} considers the long-term participating rates of clients and optimizes the sampling strategy to balance them. While those methods are only aware of the stability of model updates for active clients, and they mostly ignore the importance of mitigating the long-term bias under arbitrary client availability.

\subsection{Graph Construction in FL}
When it is probable to define the topology structure among clients in FL, several works directly utilized underlying correlations among different clients according to their social relations \cite{he2021fedgraphnn}. Other works proposed to connect the clients with their spatial-temporal relations as a graph \cite{meng2021cross, zhang2021fastgnn}. However, those works are used to conduct explicit correlations between clients (e.g. social relation, spatial relation), which were not able to uncover the important and implicit connections among clients. In short, we are the first to construct Data-Distribution-Dependency Graph (3DG) to learn the potential data dependency of sampled clients, which is proven to both guarantee a fair client sampling scheme and improve the model performance under arbitrary client availability.

% \subsection{The Impact of $\bold{\alpha}$}
% To be complemented.
\section{Conclusion}
We addressed the long-term bias and the stability of model updates simultaneously to enable faster and more stable FL under arbitrary client availability. To this end, we proposed the \textsc{FedGS} framework that models clients' data correlations with a Data-Distribution-Dependency
Graph (3DG) and utilizes the graph to stabilize the model updates. To mitigate the long-term bias, we minimize the variance of the numbers of times that clients are sampled under the far-distance-on-3DG constraint. Our experimental results on three real datasets under a comprehensive set of seven client availability modes confirm the robustness of \textsc{FedGS} on arbitrary client availability modes. In the future, we plan to study how to define and construct 3DG across various ML tasks.

\section{Acknowledgements}
The research was supported by Natural Science Foundation of China (62272403, 61872306), Fundamental Research Funds for the Central Universities (20720200031), FuXiaQuan National Independent Innovation Demonstration Zone Collaborative Innovation Platform (No.3502ZCQXT2021003), and Open Fund of PDL (WDZC20215250113).
\bibliography{aaai23}

% \nobibliography*
\appendix
\begin{appendix}

\section{A. Proof of Theorem 1}
\begin{proof}
Supposing there are $C$ types of data in the global dataset, the global dataset can be represented by the data ratio vector $\bold p^*=[p_1^*, p_2^*,...,p_C^*]^\top$, where each data type's ratio is $p_i^*> 0$ and $\bold 1^\top \bold p^*=1 $. Then, the local data distribution of each client $c_k$ can also be represented by $\bold p_k=[p_{k1},p_{k2},...,p_{kC}]^\top,\bold 1^\top \bold p_k=1 $.
% Since $\bold p_i^{\top}\bold p_j\in [0,1]$,
We slightly modify the way of constructing 3DG as
\begin{align*}
\begin{split}
R_{ij} = \left \{
\begin{array}{ll}
    0,&i=j\\
    \|e_i-e_j\|_2^2&i\ne j
\end{array}
\right.
\end{split}
\end{align*}
where $e_i=\frac{p_i}{\|p_i\|}, e_j=\frac{p_j}{\|p_j\|}$ and the principle of the smaller similarity corresponding to the larger distance still holds. Given that 3DG is a complete graph, we demonstrate that the shortest-path distance matrix $\bold H$ is the same to $\bold R$, since $distance(i,k)+distance(k,j)\ge distance(i,j)$ is always established for any $k$.

% Given two clients $c_i$ and $c_j$, the one-step distance between them is $d(i,j)=\|e_i-e_j\|_2^2$. If there exist another client $c_k$ such that $d(i,k)+d(k,j)<d(i,j)$, then
% \begin{align}
% \min_k\{d(i,k)+&d(k,j)\}=\min_k(1-\bold p_i^\top \bold p_k + 1-\bold p_j^\top \bold p_k)\\&= 2- \frac{(\bold p_i+\bold p_j)^\top(\bold p_i+\bold p_j)}{2}\\&=1-\bold p_i^\top\bold p_j\\&= d(i,j)
% \end{align}
% which leads to a conflict and indicates that $\bold H=\bold R$.

In each communication round $t$, the server samples $M$ clients to participate. By denoting the sampled clients' normalized data distribution as $\bold P_{t}=[\bold e_{i_1},...,\bold e_{i_M}]\in \mathbb{R}_+^{C\times M}$, the average shortest-path distance of these selected clients is 
\begin{align}
    f(S_t)&=\frac{1}{M(M-1)}\sum_{i,j\in S_t}H_{ij} \\&= \frac{1}{M(M-1)}\left(\sum_{i,j\in S_t}(2-2e_i^\top e_j)\right)\\&=\frac{2M}{M-1}-\frac{2}{M(M-1)}\|\bold P_t^\top\bold P_t\|_{m1}
\end{align}

After receiving the local models, the server aggregates their uploaded models to obtain
\begin{align}
    \theta_{t+1}=\sum_{k\in S_t }w_k \theta^{t+1}_k=\theta_t +\sum_{k\in S_t}w_k\Delta\theta_k^{t}
\end{align}
where $w_k$ is the aggregation weight of the selected client $c_k$ and $\bold 1^\top \bold w=1$. Now, the question is how possible can the aggregated model update recover the true model update $ \Delta \theta_{t}$ that is computed on all the clients?

To answer this question, we first simply consider the model update computed on all the $i$th type of data to be $\Delta\bar{\theta}_{ti}\in\mathbb{R}^d$ and $\bold\Delta_t=[\Delta\bar{\theta}_{t1},...,\Delta\bar{\theta}_{tC}]\in \mathbb{R}^{d\times C}$, then we can approximate the true model update by
\begin{align}
    \Delta \theta_t^*&=\sum_{i=1}^C p_i^* \Delta\bar{\theta}_{ti}\\&=\bold\Delta_t\bold p*
\end{align}
Similarly, each client $c_k$'s model update can also be represented by 
\begin{align}
    \Delta\theta_k^t=\sum_{i=1}^C p_{ki} \Delta\bar{\theta}_{ti}=\bold\Delta_t \bold p_k
\end{align}
Then, the aggregated model update is
\begin{align}
    \sum_{k\in S_t}w_k\Delta\theta_k^{t}&=\sum_{k\in S_t}\bold\Delta_t \frac{\bold p_k }{\|\bold p_k \|}w_k\|\bold p_k \|
    \\&=\bold\Delta_t \bold P_t (\bold{w}\odot [\|\bold p_1\|,...,\|\bold p_M\|]^\top)
        \\&=\bold\Delta_t \bold P_t \bold{\tilde w}
\end{align}
When there exists an optimal weight vector $\bold w^*$ such that $\exists \mu>0, \mu\bold P_t \bold w^*=\bold p^*$, we demonstrate that the sampled subset can well approximate the true model update, which requires the true weight $\bold p^*$ to fall into the region of the convex cone $\mu\bold P_t \bold w^*$. Given the assumption that $p^*$ is uniformly distributed in the simplex of $\mathbb{R}^C$, the problem becomes how to compare the probability  $\textit{Pr}\left(\bold p^*\in\{\mu\bold P_t \bold w^*|\bold 1^\top\bold w=1, \bold w\ge0, \mu>0\}\right)$ for different $\bold P_t$.
\begin{figure}
\centering

    {%
        \includegraphics[width = .48\linewidth]{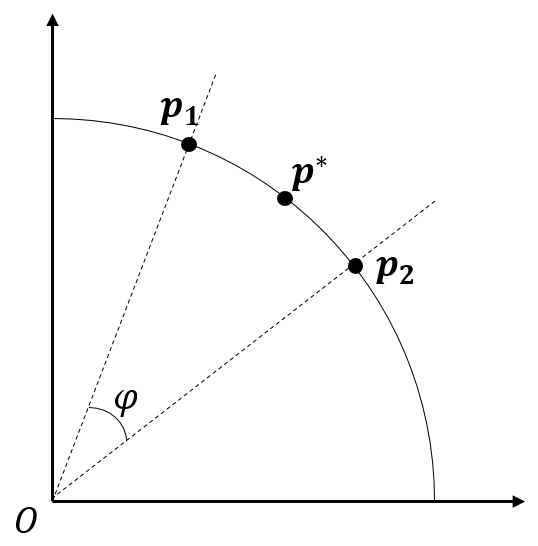}
        \includegraphics[width = .35\linewidth]{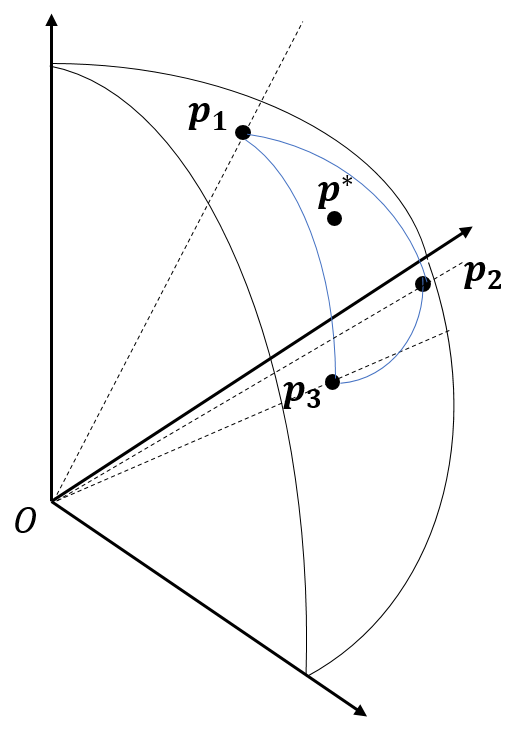}}

\caption{The example of the \textit{angle} of $n$ vectors in $n$-dimension space, where $n=2$ for the left and $n=3$ for the right.\label{fig:6}}
\end{figure}

Let's first consider the 2-dimension case (i.e. the left one of  Fig.6). Given two vectors $p_1$ and $p_2$, the probability can be measured by the ratio of the angle between $p_1$ and $p_2$ as $\frac{\phi}{\pi/2}$. For the cases where the dimension is higher than $n=2$ (i.e. the right one in Fig.6.),  \cite{ribando2006measuring} defines the normalized measurement of \textit{solid angle}
\begin{align}
    \tilde{V}_\Omega=\frac{\text{vol}_C(\Omega \cap B_C)}{\text{vol}_C(B_C)}=\frac{\text{vol}_{C-1}(\Omega\cap S_{C-1})}{\text{vol}_{C-1}S_{C-1}}
\end{align}
where $\text{vol}_C$ is the usual volume form in $\mathbb{R}^C$, $B_C$ is the unit ball in $\mathbb{R}^C$ and $S_C$ is the unit $C$-sphere. Further, they provide a way to compute the solid angle for $C$ unit positive vector 
\begin{align}
\tilde{V}_\Omega=\frac{1}{\pi^{C/2}}\int_{\mathbb{R}^C_{\ge 0}}e^{-\bold u^\top \bold P_t^\top \bold P_t\bold u}|\det\bold P_t|d\bold u
\end{align}
which represents the probability of the optimal weight in the convex cone by projecting $p^*$ and each ray (i.e. column) in $\bold P_t$ onto the surface of a unit sphere centered at $O$ in $\mathbb{R}^C$.

For the case where the number of sampled clients $M$ is smaller than $C$ or $M=C$ but $|\det \bold{P}_t|=0$, we have $\tilde{V}_\Omega=0$ (i.e. simply repeating one of the vectors in $P_t$ when $M<C$ until the number of vectors reaches $C$), which means it's nearly impossible to recover the uniformly distributed optimal weight by directly modifying the aggregation weights of the sampled clients. One way to approximate the global model update in this case is to choose clients with higher data quality in priority (i.e. local data distribution similar to the global one), which needs $\bold p^*$ and $\bold p_k$ are already known and is out of the scope of our discussion. Therefore, we limit our discussion to the case where $|\det \bold P_t|\ne 0$ and $M=C$ (i.e. the solid angle for $M>C$ can be computed by first dissecting $\bold P_t$ into simplicial cones \cite{ribando2006measuring}). 

Based on the equation (35), we demonstrate that the large average shortest-path distance will encourage a large $ \tilde{V}_\Omega$. We rearrang the equation (35) as
\begin{align}
    \tilde{V}_\Omega = \left(\frac{|\det \bold P_t|}{\pi^{C/2}}\right)\left(\int_{\mathbb{R}^C_{\ge 0}}e^{-\sum_{i,j\in[C]}u_iu_j\bold e_i^\top\bold e_j}d\bold u\right)
\end{align}

Now we respectively show how enlarging the average shortest-path distance (i.e. the equation (24)) leads to the increasing of the two terms on the right hand side of the equation (36).
\subsubsection{The Impact on The First Term.}
 To study how the determinant is impacted by the average shortest-path distance of the sampled clients, we compute the volume of the parallelepiped with the columns in $\bold P_t$ (i.e. Definition 1 and Definition 2) to obtain $|\det\bold P_t|$ according to Lemma 1 \cite{peng2007determinant}. 
 \begin{definition}
Let $\bold e_1,...,\bold e_C\in \mathbb{R}^C$. A parallelepiped $P=P(\bold e_1,...,e_C)$ is the set
\begin{align}
    P=\{\sum_{i=1}^Ct_i\bold e_i|0\le t_i\le 1\text{ for }i\text{  from }1 \text{ to } C\}
\end{align}
\end{definition}
\begin{definition}
The $n$-dimensional volume of a parallelepiped is 
\begin{align*}
\begin{split}
\text{Vol}_k[P(\bold e_1,...,\bold e_k)] = \left \{
\begin{array}{ll}
    \|\bold e_1\|,&k=1\\
    \text{Vol}_{k-1}[P(\bold e_1,...,\bold e_{k-1})]\|\bold{\tilde e}_k\|,&k>1
\end{array}
\right.
\end{split}
\end{align*}
where $\bold{\tilde e}_k=\bold e_k+(\bold e_1,...,\bold e_{k-1})\bold a=\bold e_k + \bold E_{k-1}\bold a_k$ and the unique chosen of $\bold a_k$ satisfies $\bold{\tilde e}_k^\top \bold e_i=0,\forall i\in[k-1]$.
\end{definition}
 \begin{lemma}
Given a $C$-dimensional parallelepiped $P$ in $C$-dimensional space defined by columns in $\bold P_t$, we have $\text{Vol}_C(P)=|\det \bold P_t|$
\end{lemma}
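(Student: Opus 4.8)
The plan is to recognize Definition 2 as the Gram--Schmidt orthogonalization procedure and to convert it into a $QR$-type factorization of $\bold P_t$, after which the identity drops out from multiplicativity of the determinant. First I would observe that the vector $\bold{\tilde e}_k=\bold e_k+\bold E_{k-1}\bold a_k$ specified in Definition 2 is, by the defining condition $\bold{\tilde e}_k^\top\bold e_i=0$ for all $i\in[k-1]$, precisely the component of $\bold e_k$ orthogonal to $\text{span}(\bold e_1,\dots,\bold e_{k-1})$; that is, $\bold{\tilde e}_k$ is the $k$th Gram--Schmidt vector. Unrolling the recursion in Definition 2 then yields $\text{Vol}_C[P]=\prod_{k=1}^{C}\|\bold{\tilde e}_k\|$.

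Next I would assemble the factorization. Setting $\bold q_k=\bold{\tilde e}_k/\|\bold{\tilde e}_k\|$ gives an orthonormal family, and by construction each $\bold e_k$ lies in $\text{span}(\bold q_1,\dots,\bold q_k)$ with its coefficient on $\bold q_k$ equal to $\|\bold{\tilde e}_k\|$. Collecting the $\bold q_k$ as the columns of an orthogonal matrix $\bold Q$ yields $\bold P_t=\bold Q\bold R$, where $\bold R$ is upper triangular with diagonal entries $\|\bold{\tilde e}_1\|,\dots,\|\bold{\tilde e}_C\|$. Taking determinants and using $|\det\bold Q|=1$ gives
\begin{align*}
|\det\bold P_t|=|\det\bold Q|\,|\det\bold R|=\prod_{k=1}^{C}\|\bold{\tilde e}_k\|=\text{Vol}_C[P],
\end{align*}
which is the claim.

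I expect the main point requiring care to be the book-keeping that certifies $\bold R$ is genuinely upper triangular with the stated diagonal, i.e.\ verifying that re-expressing each $\bold e_k$ in the orthonormal basis $\{\bold q_i\}$ introduces no entries below the diagonal and places exactly $\|\bold{\tilde e}_k\|$ on it; this is immediate from the Gram--Schmidt relations but must be stated precisely. A secondary issue is the degenerate case in which $\bold e_1,\dots,\bold e_C$ are linearly dependent: then some $\bold{\tilde e}_k=0$, so both $\text{Vol}_C[P]$ and $|\det\bold P_t|$ vanish and the identity holds trivially, although here the vector $\bold a_k$ in Definition 2 is no longer unique and should be handled separately. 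An alternative route that avoids $QR$ altogether is to prove by induction on $k$ that $\text{Vol}_k^2=\det(\bold E_k^\top\bold E_k)$, the Gram determinant, using the block decomposition induced by $\bold{\tilde e}_k\perp\text{span}(\bold e_1,\dots,\bold e_{k-1})$, and then to specialize to $k=C$, where $\det(\bold P_t^\top\bold P_t)=(\det\bold P_t)^2$ recovers the result.
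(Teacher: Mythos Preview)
Your proposal is correct: identifying Definition~2 as Gram--Schmidt, unrolling the recursion to $\prod_k\|\bold{\tilde e}_k\|$, and reading this off as $|\det\bold R|$ in a $QR$ factorization is a clean and standard route to the identity. The handling of the degenerate case and the alternative Gram-determinant induction are both sound.

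There is, however, nothing to compare against: the paper does not supply its own proof of this lemma. It is stated as a cited fact (attributed to \cite{peng2007determinant}) and then simply invoked in the argument for Theorem~1, where the authors expand $|\det\bold P_t|=\text{Vol}_C(P_C)=\|\bold{\tilde e}_C\|\,\text{Vol}_{C-1}(P_{C-1})$ via Definition~2 to study how the last factor $\|\bold{\tilde e}_C\|^2$ depends on the inner products $\bold e_C^\top\bold e_i$. So your write-up would in fact supply more detail than the paper itself provides for this step.
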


By denoting $P_k=P(\bold e_1,...,\bold e_k)$, the absolute value of the determinant of $\bold P_t$ is
 \begin{align}
     |\det \bold P_t|=\text{Vol}_C(P_C)=||\bold{\tilde e}_C||\text{Vol}_{C-1}(P_{C-1})\\=\sqrt{(\bold e_C + \bold E_{C-1}\bold a_C)^\top(\bold e_C + \bold E_{C-1}\bold a_C)}\text{Vol}_{C-1}(P_{C-1})
 \end{align}
 
Since $\bold{\tilde e}_C^\top \bold e_i=0,\forall i\in[C-1]$, we have
\begin{align}
    &\bold E_{C-1}^\top(\bold e_C+\bold E_{C-1}\bold a_C)=0\\\Rightarrow \bold a_C&=-( \bold E_{C-1}^\top \bold E_{C-1})^{-1}\bold E_{C-1}^\top\bold e_C
\end{align}
 Thus, the inner product of $\bold{\tilde e}_C$ with itself is
 \begin{align}
     \bold{\tilde e}_C^\top \bold{\tilde e}_C=\|\bold e_C-\bold E_{C-1}(\bold E_{C-1}^\top \bold E_{C-1})^{-1}\bold E_{C-1}^\top\bold e_C\|_2^2\\=1-[\bold e_C^\top \bold e_i]_{i\ne C}^\top(\bold E_{C-1}^\top \bold E_{C-1})^{-1}[\bold e_C^\top \bold e_i]_{i\ne C}
 \end{align}
 
 Given $|\det P|\ne0\rightarrow rank(\bold E_{C-1})=C-1$, we have that $\bold E_{C-1}^\top \bold E_{C-1}$ is a real symmetric matrix and full rank, which indicates that it is orthogonally diagonalizable. Thus, we orthogonally diagonalize $\bold E_{C-1}^\top \bold E_{C-1}$ into $Q\Lambda Q^\top$ to obtain its inverse $(\bold E_{C-1}^\top \bold E_{C-1})^{-1}=Q\Lambda^{-1} Q^\top$ where $\lambda_i>0$. And we rewrite the equation (43) with $\bold h_C=[\bold e_C^\top \bold e_1,...,\bold e_C^\top \bold e_{C-1}]$ as:
 \begin{align}
     1-\bold h_C^\top \tilde{Q}\tilde{Q}^\top\bold h_C&=1-\|\tilde{Q}\bold h_C\|_2^2\\&=1-\sum_{i=1}^{C-1} \frac{h_i^2}{\lambda_i^2}\\&=1-\sum_{i=1}^{C-1}\frac{1}{\lambda_i^2}(\bold e_C^\top \bold e_{i})^2
 \end{align}

From the equation (46), we can see that keeping $\bold e_C$ less similar with all the other vectors (i.e. $\bold e_C^\top \bold e_{i},\forall i\ne C$) will increase $|\det \bold P_t|$. In addition, the analysis doesn't specify the $C$th vector in $\bold P_t$ to be any particular client. Therefore, enlarging $f(S_t)$ will also enlarge the absolute value of the determinant of $\bold P_t$.

\subsubsection{The Impact on The Second Term.}

According to Cauchy–Schwarz inequality,
\begin{align}
    &\int_{\mathbb{R}^C_{\ge 0}}e^{-\sum_{i,j\in[C]}u_iu_j\bold e_i^\top\bold e_j}d\bold u\\\ge
    \int_{\mathbb{R}^C_{\ge 0}}&e^{-\sqrt{\sum_{i,j\in[C]}(u_iu_j)^2}\sqrt{\sum_{i,j\in[C]} (\bold e_i^\top\bold e_j)^2}}d\bold u\\\ge
    \int_{\mathbb{R}^C_{\ge 0}}&e^{-\sqrt{\sum_{i,j\in[C]}(u_iu_j)^2}\sqrt{\sum_{i,j\in[C]} \bold e_i^\top\bold e_j}}d\bold u\\=\int_{\mathbb{R}^C_{\ge 0}}&e^{-\sqrt{\sum_{i,j\in[C]}(u_iu_j)^2}\|\bold P_t^\top\bold P_t\|_{m1}^{\frac12}}d\bold u
\end{align}
which indicates that enlarging $f(S_t)$ in the equation (24) can improve the lower bound of the second term of the solid angle. 

Therefore, we conclude that keeping the average shortest-path distance of the sampled subset to be large will increase the chance to find a proper aggregation weight to well approximate the true model update.

\end{proof} 
\section{B. Results on Constructed 3DG}
We also run FedGS on the 3DG constructed by using the proposed functional similarity and cosine similarity of model parameters. We vary the same $\alpha\in\{0,0.5,1,2,5\}$ for $\text{FedGS}_{func}$ and $\text{FedGS}_{cos}$, and list the optimal results of them with the same settings of Table \ref{table2}, as is listed in Table 4. For Synthetic dataset, $\text{FedGS}_{cos}$ outperforms $\textbf{FedGS}$ and $\textbf{FedGS}_{func}$ under most of the client availability, which suggests that our definition of the oracle graph on Synthetic dataset is not the true oracle one. For CIFAR10, $\text{FedGS}_{func}$'s performance is competitive with the results obtained by \textbf{FedGS}, and the two methods consistently outperform $\textbf{FedGS}_{cos}$ in most cases. For FashionMNIST, $\text{FedGS}_{func}$ and $\text{FedGS}_{cos}$ suffer more performance reduction than CIFAR10. However, they still outperform MDSample and UniformSample when client availability changes (0.313 v.s. 0.333 in \textbf{YC}).
\begin{table}
\centering
\begin{tabular}{|l|l|l|l|l|} 
\hline
\multicolumn{2}{|l|}{\textbf{Setting}}                & \textbf{FedGS} & $\textbf{FedGS}_{func}$  & $\textbf{FedGS}_{cos}$\\ 
\hline
\multirow{5}{*}{\textbf{Synthetic}}    & \textbf{IDL} & 0.306& \textbf{0.304} &\textbf{0.304}\\ 
\cline{2-5}
                                       & \textbf{LN}  & 0.305  & 0.306 &\textbf{0.304}\\ 
\cline{2-5}
                                       & \textbf{SLN} & \textbf{0.318}  & 0.319&\textbf{0.318}\\ 
\cline{2-5}
                                       & \textbf{LDF} & 0.308  & \textbf{0.306} &0.307 \\ 
\cline{2-5}
                                       & \textbf{MDF} & 0.310 &  0.310 & \textbf{0.309} \\ 
\hline
\multirow{5}{*}{\textbf{CIFAR10}}      & \textbf{IDL} &0.968   &\textbf{0.963} &0.971 \\ 
\cline{2-5}
                                       & \textbf{LN}  & \textbf{0.972}  &\textbf{0.972} &0.973\\ 
\cline{2-5}
                                       & \textbf{SLN} & 0.996 & \textbf{0.991}  &  0.994\\ 
\cline{2-5}
                                       & \textbf{LDF} & \textbf{0.966} &0.975 &0.967 \\ 
\cline{2-5}
                                       & \textbf{MDF} & 0.963 & \textbf{0.962}  & 0.965\\ 
\hline
\multirow{3}{*}{\textbf{Fashion}} & \textbf{IDL} & \textbf{0.299}  & 0.305 & 0.304\\ 
\cline{2-5}
                                       & \textbf{YMF} &\textbf{0.303} & 0.307 & 0.307\\ 
\cline{2-5}
                                       & \textbf{YC}  & \textbf{0.307}  &0.314 & 0.313\\
\hline
\end{tabular}
\caption{The comparison of testing loss of FedGS running on the Oracle/Constructed 3DG.}
\label{table4}
\end{table}

\section{C. Experimental Details}
\subsection{Datasets}
\subsubsection{Synthetic.} We follow the setting in \cite{li2020federated} to generate this dataset by
\begin{align}
    y_{k,i}=\text{argmax}\{softmax(\bold W_k \bold x_{k,i}+\bold b_k)\}
\end{align}
where $(\bold x_{k,i}, y_{k,i})$ is the $i$th example in the local data $D_k$ of client $c_k$. For each client $c_k$, its local optimal model parameter $(\bold W_k,\bold b_k)$ is generated by $\mu_k\sim \mathcal{N}(0,\alpha)\in \mathbb{R},\bold W_{k}[i,j]\sim\mathcal{N}(\mu_k,1), \bold W_k\in \mathbb{R}^{10\times 60},\bold b_{k}[i]\sim\mathcal{N}(\mu_k,1), \bold b_k\in\mathbb{R}^{10}$, and its local data distribution is generated by $B_k\sim\mathcal{N}(0,\beta), \bold v_k[i]\sim\mathcal{N}(B_k,1), \bold v_k\in \mathbb{R}^{60}, \bold x_{k,i}\sim\mathcal{N}(\bold v_k,\bold \Sigma)\in \mathbb{R}^{60},\bold\Sigma=\text{diag}(\{i^{-1.2}\}_{i=1}^{60})$. The local data size for each client is $n_k\sim lognormal(4,2)$. In our experiments, we generate this dataset for 30 clients with $\alpha=\beta=0.5$.

\subsubsection{CIFAR10.} The CIFAR10 dataset \cite{krizhevsky2009learning} consists of totally 60000 32x32 colour images in 10 classes (i.e. 50000 training images and 10000 test images). We partition this dataset into 100 clients with both data size imbalance and data heterogeneity. To create the data imbalance, we set each client's local data size $n_k\sim lognormal(log(\frac{n}{N})-0.5, 1)$ to keep the mean data size is $\bar n=\frac{n}{N}$. Then, we generate the local label distribution $\bold p_k\sim Dirichlet(\alpha \bold p*)$ for each client, where $\bold p*$is the label distribution in the original dataset. Particularly, we loop replacing the local label distribution of clients with the new generated one from the same distribution until there exists no conflict with the allocated local data sizes, which allows the coexisting of controllable data imbalance and data heterogeneity. We use $\alpha=1.75$ in our experiments and provide the visualized partition in Fig.7(a).

\subsubsection{FashionMNIST.} The dataset \cite{xiao2017fashion} consists of a training set of 60,000 examples and a test set of 10,000 examples, where each example is a $28\times28$ size image of fashion and associated with a label from 10 classes. In this work, we partition this dataset into 100 clients and equally allocate the same number of examples to each one, where each client owns data from two labels according to \cite{mcmahan2017communication}. A direct visualization of the partitioned result is provided in Fig.7(b).

\begin{figure}
\centering
\subfigure[CIFAR10]{
	\label{fig7a}
	\includegraphics[width = .46\linewidth]{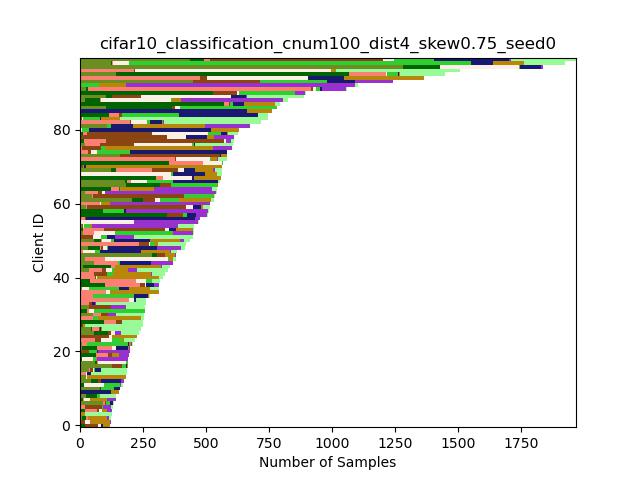}}
\subfigure[FashionMNIST]{
	\label{fig7b}
	\includegraphics[width = .46\linewidth]{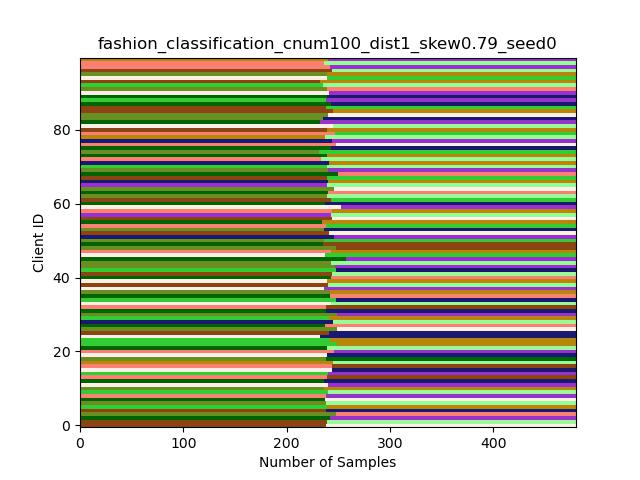}}
\caption{The visualization of data partition for CIFAR10 (a) and FashionMNIST (b). Each bar in the figures represents a client's local dataset and each label is assign to one color. The length of each bar reflects the size of the local data.}
\label{fig:7}
\end{figure}
\subsubsection{Models.}For CIFAR10 and FashionMNIST, we use CNNs that are respectively used for CIFAR10 and MNIST in \cite{mcmahan2017communication}. For Synthetic dataset, we use the same logistical regression model as \cite{li2020federated}.

% \begin{figure}
% \centering

%     {%
%         \includegraphics[width = .5\linewidth]{fig7a.jpg}
%         \includegraphics[width = .5\linewidth]{fig7b.jpg}}
% \caption{The visualization of data partition for CIFAR10 (left) and FashionMNIST (right).\label{fig:7}}
% \end{figure}
\subsection{Hyperparameters}
For each dataset, we partition each client's local data into training and validating parts. Then, we tune the hyperparameters with FedAvg\cite{mcmahan2017communication} by grid search on the validation datasets under the ideal client availability. The batch size is $B=10$ for Synthetic and $B=32$ for both CIFAR10 and FashionMNIST. Specifically, we fixed the local updating steps instead of epochs to avoid unexpected bias caused by imbalanced data \cite{wang2020tackling}.  We search the number of local update steps in $E\in\{10,50,100\}$ for all the datasets and the learning rate $\eta_{Synthetic}\in\{0.01, 0.05, 0.1, 0.3\}, \eta_{CIFAR10, Fashion}\in\{0.003, 0.01, 0.03, 0.1\}$. For CIFAR10, the optimal parameters are $E=10, \eta=0.03$ and we train the model for 1000 rounds. For Synthetic, we train the model for 1000 round using $\eta=0.1,E=10$. For FashionMNIST, we train the model for 500 rounds with the optimal parameters $E=10,\eta=0.1$. We round-wisely decay the learning rate by a factor of $0.998$ for all the datasets.
To simulate the communication constraint of the server, we fixed the proportion of selected clients to be $0.1$ for CIFAR10 and FashionMNIST, and $0.2$ for Synthetic dataset, respectively.

\subsection{Client Availability Modes}

To obtain the results in Table \ref{table2}, we conduct experiments under the client availability modes of  \textbf{IDL}, \textbf{LN0.5}, \textbf{SLN0.5}, \textbf{LDF0.7}, \textbf{MDF0.7}, \textbf{YMF0.9} and \textbf{YC0.9} across different datasets, where the  float number at the end of the name of these settings is the coefficient $\beta$ that controls the degree of the global unavailability of clients.
For each client availability mode, we visualize the active states of clients in each communication round, which provides an intuitive way to distinguish the difference between the client availability modes. In Fig.8, we respectively visualize the \textbf{IDL}, \textbf{LDF0.7}, \textbf{MDF0.7} in Synthetic, \textbf{LN0.5}, \textbf{SLN0.5} in CIFAR10 and \textbf{YMF0.9}, \textbf{YC0.9} in FashionMNIST. For a fair comparison of different methods, we use an independent random seed (i.e. independent to the other random seeds used to optimizing the model) to control the active states of clients in each communication round, which promises that the active states of clients will remain unchanged when running different methods.

\begin{figure}
\centering

    {%
        \includegraphics[width = .47\linewidth]{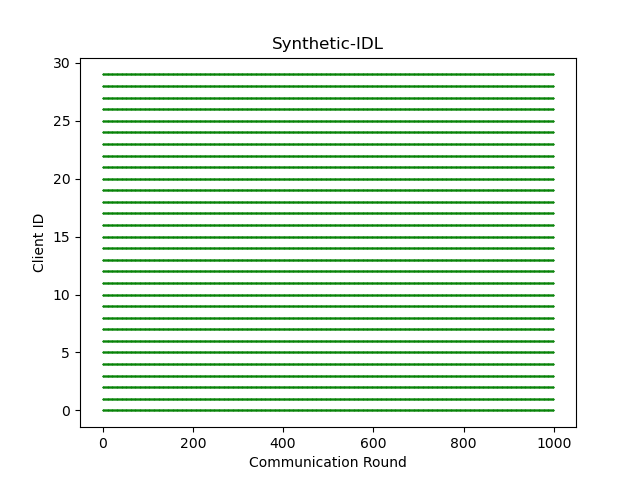}
        \includegraphics[width = .47\linewidth]{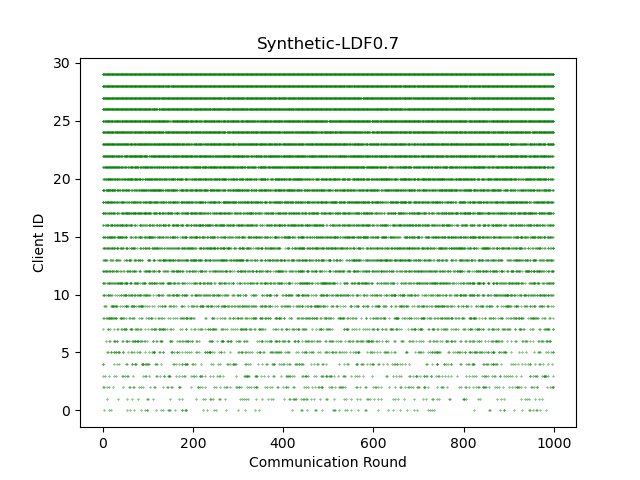}}

    {%
        \includegraphics[width = .47\linewidth]{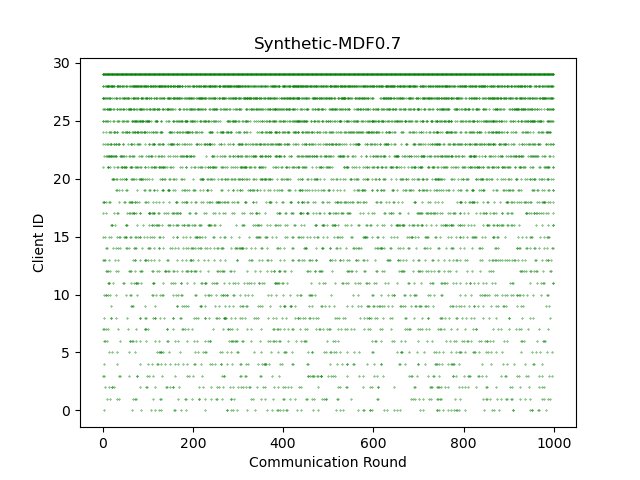}
        \includegraphics[width = .47\linewidth]{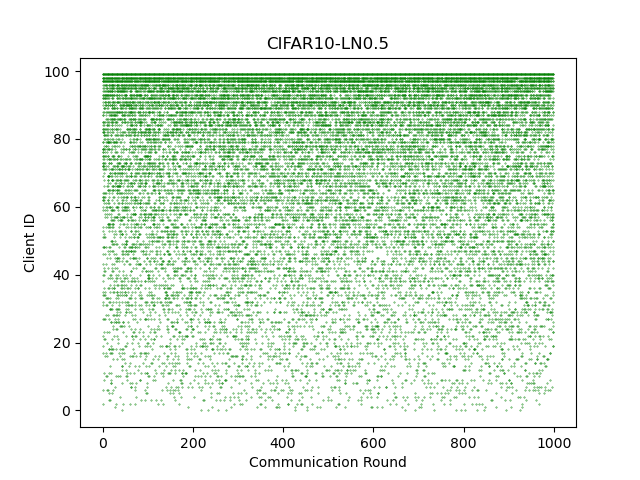}}
    {%
        \includegraphics[width = .47\linewidth]{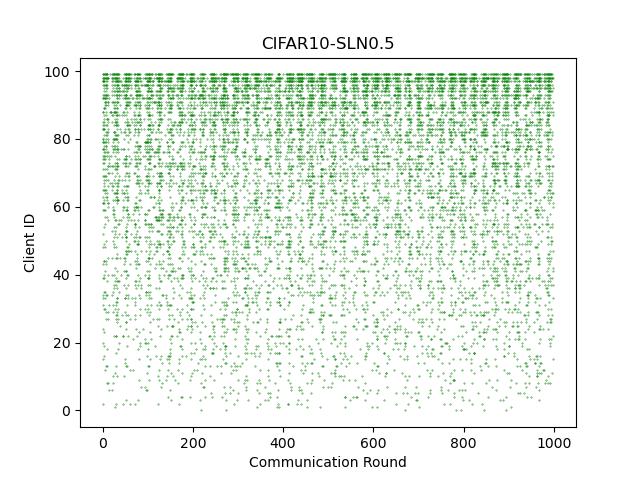}
        \includegraphics[width = .47\linewidth]{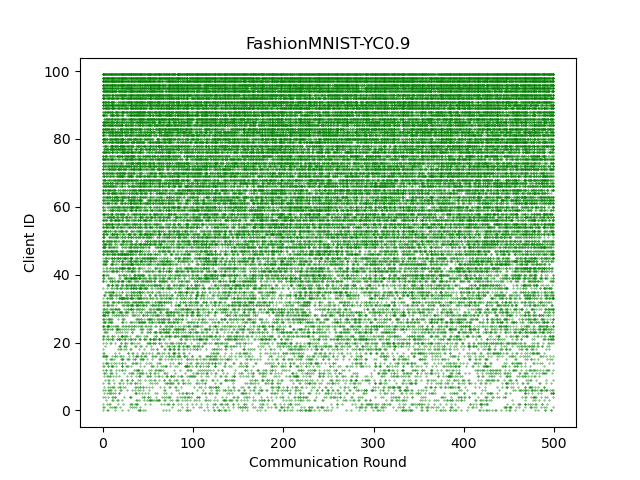}}

    {%
        \includegraphics[width = .47\linewidth]{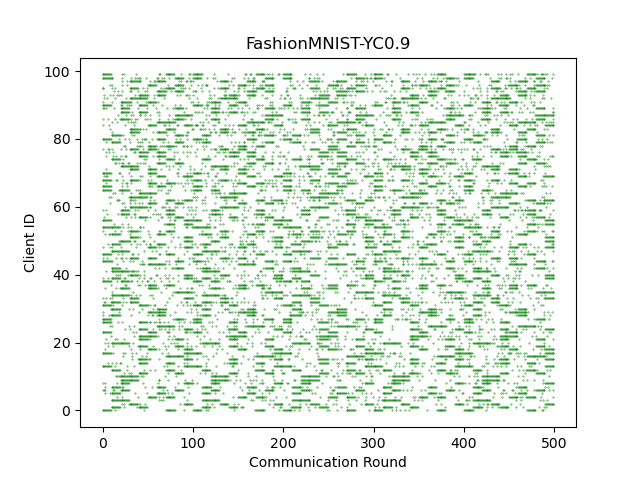}
        }

\caption{The active states of clients in each communication round under different availability modes.\label{fig:8}}
\end{figure}

\subsection{Oracle 3DG For Datasets}
For FashionMNIST and CIFAR10, we use the local data distribution vectors as the features to calculate the similarities among clients. For Synthetic, since the difference between clients' local data mainly locates in the optimal model, we directly use the local optimal model parameters $(\bold W_k,\bold b_k)$ as the feature vectors. Then, we use the inner dot as the similarity function to compute the similarity matrix $\bold V$. We further normalize the similarity value $V_{ij}$ to $[0,1]$ by computing $V_{ij}'=\frac{V_{ij}-\min_{i,j} V_{ij}}{\max_{i,j} V_{ij}-\min_{i,j} V_{ij}}$, and finally use the normalized similarity matrices to construst the oracle 3DGs as mentioned in Sec.3.2.

\section{D. Scalar Product Protocol}
In our first method to construct the 3DG, we use techniques based on Secure Scalar Product Protocols (SSPP) to compute the dot product of private vectors of parties. We argue that any existing solution of SSPP can be used in our settings. Here we introduce \cite{du2002building} to our FL scenario. The original protocol works as follows. Alice and Bob have different features on the same individuals and want to calculate the scalar product of their private vectors $\bold{A}$ and $\bold{B}$, both of size $m$ where $m$ is the sample size of the datasets. They will do this with the help of a commodity server named Merlin. The protocol consists of the following steps. First, Merlin generates two random vectors $\bold R_a $, $\bold R_b $ of size $m$ and two scalars $ r_a $ and $ r_b $ such that $ r_a + r_b = \bold R_a \cdot \bold R_b $, where either $ r_a $ or $ r_b $ is randomly generated. Merlin then sends $ \{ \bold R_a , r_a \} $ to Alice and $ \{ \bold R_b , r_b \} $ to Bob. Second, Alice sends $ \bold{\hat{A}} = \bold A + \bold R_a $ to Bob, and Bob sends $ \bold{\hat{B}} = \bold B + \bold R_b $ to Alice. Third, Bob generates a random number $ v_2 $ and computes $ u = \bold{\hat{A}} \cdot \bold{B} + r_b - v_2 $, then sends the result to Alice. Fourth, Alice computes $ u - (\bold{R_a} \cdot \bold{\hat{B}}) + r_a = \bold{A} \cdot \bold{B} - v_2 = v_1 $ and sends the result to Bob. Finally, Bob calculates the final result $ v_1 + v_2 = \bold{A} \cdot \bold{B} $. In our FL settings, since there are no communications between clients, we pass intermediate variables between clients through the server. The pseudo codes in Algorithm 2 summarizes the steps of scalar product in our FL settings.

\begin{algorithm}[tb]
    \caption{Scalar Product}
    \label{alg:2}

    \textbf{Input}:The client $A$'s feature vector $ \bold{A} \in \mathbb{R}^{d} $, and the client $B$'s feature vector $\bold{B} \in \mathbb{R}^{d}$\\
    \begin{algorithmic}[1]
    \STATE Server generates two random vectors $\bold R_a \in \mathbb{R}^{d}$, $\bold R_b \in \mathbb{R}^{d}$ and two scalars $ r_a $ and $ r_b $ such that $ r_a + r_b = \bold R_a \cdot \bold R_b $, where either $ r_a $ or $ r_b $ is randomly generated.
    \STATE Server sends $ \{ \bold R_a , r_a \} $ to client $A$ and $ \{ \bold R_b , r_b \} $ to client $B$.
    \STATE Client $A$ sends $ \bold{\hat{A}} = \bold A + \bold R_a $ to server, and client $B$ sends $ \bold{\hat{B}} = \bold B + \bold R_b $ to server.
    \STATE Server sends $ \bold{\hat{A}} $ to client $B$, and $ \bold{\hat{B}} $ to client $A$.
    \STATE Client $B$ generates a random number $ v_2 $ and computes $ u = \bold{\hat{A}} \cdot \bold{B} + r_b - v_2 $, then sends $u$ and $v_2$ to server.
    \STATE Server sends $u$ to client $A$.
    \STATE Client $A$ computes $ u - (\bold{R_a} \cdot \bold{\hat{B}}) + r_a = \bold{A} \cdot \bold{B} - v_2 = v_1 $ and sends $v_1$ to the server.
    \STATE Server calculates the scalar product $ v_1 + v_2 = \bold{A} \cdot \bold{B} $.
    \end{algorithmic}
    \end{algorithm}
\end{appendix}

\end{document}